\DeclareMathOperator{\sgn}{sgn}
\newtheorem{claim}{Claim}
\newcolumntype{C}[1]{>{\centering\arraybackslash}m{#1}}
\begin{document}

\title{An Alternative Probabilistic Interpretation of the Huber Loss}

\author{Gregory P. Meyer\vspace{0.25em}\\
Uber Advanced Technologies Group\\
{\tt\small gmeyer@uber.com}
}

\maketitle

\begin{abstract}
The Huber loss is a robust loss function used for a wide range of regression tasks.
To utilize the Huber loss, a parameter that controls the transitions from a quadratic function to an absolute value function needs to be selected.
We believe the standard probabilistic interpretation that relates the Huber loss to the Huber density fails to provide adequate intuition for identifying the transition point.
As a result, a hyper-parameter search is often necessary to determine an appropriate value.
In this work, we propose an alternative probabilistic interpretation of the Huber loss, which relates minimizing the loss to minimizing an upper-bound on the Kullback-Leibler divergence between Laplace distributions, where one distribution represents the noise in the ground-truth and the other represents the noise in the prediction.
In addition, we show that the parameters of the Laplace distributions are directly related to the transition point of the Huber loss.
We demonstrate, through a toy problem, that the optimal transition point of the Huber loss is closely related to the distribution of the noise in the ground-truth data.
As a result, our interpretation provides an intuitive way to identify well-suited hyper-parameters by approximating the amount of noise in the data, which we demonstrate through a case study and experimentation on the Faster R-CNN and RetinaNet object detectors.
\end{abstract}

\section{Introduction}

A typical problem in machine learning is estimating a function $F_\theta$ that maps from $x \in \mathbb{R}^n$ to $y \in \mathbb{R}$ given a set of training examples $\mathcal{D} = \{x_i, y_i\}_{i=0}^N$.
The parameters of the function $\theta$ are often determined by minimizing a loss function $\mathcal{L}$,
\begin{equation}
    \hat{\theta} = \arg\min_\theta \sum_{i=0}^N \mathcal{L}(y_i - F_\theta(x_i))
\end{equation}
and the choice of loss function can be crucial to the performance of the model.
The Huber loss is a robust loss function that behaves quadratically for small residuals and linearly for large residuals \cite{huber}.
The loss function was proposed over a half-century ago, and it is still widely used today for a variety of regression tasks, including 2D object detection \cite{fast-rcnn,retinanet,ssd,faster-rcnn}, 3D object detection \cite{chen2016,mv3d,avod,pixor}, shape and pose estimation \cite{alp2017,ku2019,wang2018}, and stereo estimation \cite{chang2018}.

A challenge with utilizing the Huber loss in practice is selecting an appropriate value to transition from a quadratic error to a linear error.
Under certain assumptions, minimizing a loss function can be interpreted as maximizing the likelihood of $y_i$ given $x_i$,
\begin{equation}
    \hat{\theta} = \arg\max_\theta \prod_{i=0}^N p(y_i|x_i, \theta)
\end{equation}
when $p(y_i|x_i, \theta) \propto \exp\left[-\mathcal{L}(y_i - F_\theta(x_i))\right]$.
Therefore, the estimate $\hat{\theta}$ that minimizes the Huber loss can be interpreted as the maximum likelihood estimate of $\theta$ when $p(y_i|x_i, \theta)$ is the Huber density~\cite{huber}.
The Huber density can be difficult to interpret; as a result, hyper-parameter search is often employed to identify a satisfactory transition point for the Huber loss.

In this work, we propose an alternative probabilistic interpretation of the Huber loss.
Our interpretation assumes $y_i$ is a noisy estimate of the true value $y_i^*$, and we show that minimizing the Huber loss is equivalent to minimizing an upper-bound on the Kullback-Leibler (KL) divergence,
\begin{equation}
    \sum_{i=0}^N D\left(p(y^*_i|y_i)\|q(y^*_i|x_i, \theta)\right)
\end{equation}
when $p(y^*_i|y_i)$ and $q(y^*_i|x_i, \theta)$ are Laplace distributions and the scale of the distributions are directly related to the transition point of the Huber loss.
For real-world problems, the value of $y_i$ corresponding to $x_i$ is often provided by a human annotator; therefore, it is likely to contain some amount of noise. 
We believe that approximating the amount of noise in the ground-truth is a more intuitive way to determine the transition point for the Huber loss than reasoning about the Huber density.

In the following sections, we survey the related work (Section~\ref{sec:related-work}), review the Huber loss and maximum likelihood estimation in detail (Section~\ref{sec:background}), propose our alternative probabilistic interpretation of the Huber loss (Section~\ref{sec:method}), utilize a toy problem to illustrate the relationship between the optimal transition point of the Huber loss and the noise distribution of the ground-truth (Section~\ref{sec:toy-problem}), leverage our interpretation to analyze the loss functions utilized by modern object detectors (Section~\ref{sec:case-study}), and show that our proposed interpretation can lead to better hyper-parameters (Section~\ref{sec:experiments}).

\section{Related Work}
\label{sec:related-work}

Noy and Crammer \cite{noy2014}, remarked on the similarity between the Huber loss and the KL divergence of Laplace distributions, which motivates their use of a Laplace-like family of distributions in the PAC-Bayes framework.
However, they did not explore the relationship beyond this observation.
In this work, we further pursue the connection between the Huber loss and the KL divergence of Laplace distributions, and we identify the links between the parameters of the Huber loss and the parameters of the Laplace distributions.

Lange \cite{lange1990}, proposed a set of potential functions for image reconstruction that behave like the Huber loss, but unlike the Huber loss, these functions are more than once differentiable.
In this work, we propose a loss function which is similar to a potential function in \cite{lange1990}.
However, our proposed loss is derived directly from the KL divergence of Laplace distributions; whereas, the potential functions in \cite{lange1990} are derived through double integration of symmetric and positive functions.

\section{Background}
\label{sec:background}

\subsection{Huber Loss}

Loss functions commonly used for regression are $L_1(x) = |x|$ and $L_2(x) = \frac{1}{2} x^2$.
Both of these functions have advantages and disadvantages; $L_1$ is less sensitive to outliers in the data, but it is not differentiable at zero.
Whereas, the $L_2$ is differentiable everywhere, but it is highly sensitive to outliers.
Huber proposed the following loss as a compromise between the $L_1$ and $L_2$ losses \cite{huber}:
\begin{equation}
    H_\alpha(x) =
    \begin{cases} 
        \frac{1}{2} x^2, & |x| \leq \alpha \\
        \alpha \left( |x| - \frac{1}{2} \alpha \right), & |x| > \alpha
    \end{cases}
    \label{eqn:huber_loss}
\end{equation}
where $\alpha \in \mathbb{R}^+$ is a positive real number that controls the transition from $L_1$ to $L_2$.
The Huber loss is both differentiable everywhere and robust to outliers.

A disadvantage of the Huber loss is that the parameter $\alpha$ needs to be selected.
In this work, we propose an intuitive and probabilistic interpretation of the Huber loss and its parameter $\alpha$, which we believe can ease the process of hyper-parameter selection.
Next, we review how minimizing the loss functions are related to maximum likelihood estimation.

\subsection{Maximum Likelihood Estimation}

Assume we have some data $\mathcal{D} = \{x_i, y_i\}_{i=0}^N$ independently drawn from some unknown distribution.
Let us model the relationship between $x_i$ and $y_i$ as
\begin{equation}
    y_i = F_\theta(x_i) + \epsilon
\end{equation}
where $F_\theta$ is a deterministic function parameterized by $\theta$, and $\epsilon$ is random noise drawn from some known distribution.
The goal of maximum likelihood estimation is to identify the parameter $\hat{\theta}$ that maximizes the likelihood of $y_i$ given $x_i$ across the dataset $\mathcal{D}$.
Note that maximizing the likelihood of $y_i$ given $x_i$ is equivalent to minimizing the negative log likelihood,
\begin{align}
  \begin{split}
    \hat{\theta} &= \arg\max_\theta \prod_{i=0}^N p(y_i|x_i, \theta) \\
    &= \arg\min_\theta -\sum_{i=0}^N \log p(y_i|x_i, \theta)\text{.}
  \end{split}
\end{align}
Consider the case when the noise $\epsilon$ is drawn independently from a zero-mean Gaussian distribution.
The probability density for $y_i$ given $x_i$ becomes
\begin{equation}
    p(y_i|x_i, \theta) = \frac{1}{\sqrt{2 \pi \sigma^2}} \exp\left(-\frac{(y_i - F_\theta(x_i))^2}{2\sigma^2}\right)
\end{equation}
where $\sigma \in \mathbb{R^+}$ is the standard deviation of the noise, and the negative log likelihood becomes
\begin{equation}
    -\log p(y_i|x_i, \theta) = \log \sqrt{2 \pi \sigma^2} + \frac{(y_i - F_\theta(x_i))^2}{2\sigma^2}\text{.}
\end{equation}
Notice that
\begin{align}
  \begin{split}
    \hat{\theta} &= \arg\min_\theta -\sum_{i=0}^N \log p(y_i|x_i, \theta) \\
    &= \arg\min_\theta \sum_{i=0}^N (y_i - F_\theta(x_i))^2
  \end{split}
\end{align}
by assuming a constant $\sigma$ and dropping the constant term.
Therefore, identifying $\hat{\theta}$ that minimizes the $L_2$ loss over the dataset is equivalent to the maximum likelihood estimate of $\theta$ when $p(y_i|x_i, \theta)$ follows a Gaussian distribution.
In addition, minimizing the $L_1$ loss can be shown to be the same as the maximum likelihood estimation when the noise is drawn from a Laplace distribution.
In \cite{huber}, it is demonstrated that minimizing the Huber loss provides the maximum likelihood estimate when the probability density takes the form $p(y_i|x_i, \theta) \propto \exp\left[-H_\alpha(y_i - F_\theta(x_i))\right]$, which is sometimes referred to as the Huber density. 
The Huber loss is a combination of the $L_1$ and $L_2$ losses; therefore, the Huber density is a hybrid of the Gaussian and Laplace distributions.

The Huber density is more complicated than either the Gaussian or Laplace distribution individually, and we believe this complexity makes it challenging to use this interpretation of the Huber loss for selecting the parameter $\alpha$.
For this reason, we propose an alternative probabilistic interpretation.

\section{Proposed Method}
\label{sec:method}

Like above, assume we have a dataset $\mathcal{D} = \{x_i, y_i\}_{i=0}^N$, but let us consider the following relationships:
\begin{align}
    y^*_i &= y_i + \epsilon_1 \\
    y^*_i &= F_\theta(x_i) + \epsilon_2
\end{align}
where $y^*_i$ is an unknown value we would like to estimate with $F_\theta(x_i)$, $y_i$ is a known estimate of $y^*_i$, and $\epsilon_1$ and $\epsilon_2$ are random noise variables drawn independently from separate but known distributions.
Since $y^*_i$ is hidden, we are unable to estimate $\hat{\theta}$ by directly maximizing the likelihood of $y^*_i$ given $x_i$.
Alternatively, we can estimate $\hat{\theta}$ by minimizing the Kullback-Leibler (KL) divergence between the distributions $p(y^*_i|y_i)$ and $q(y^*_i|x_i, \theta)$.
Intuitively, $p(y^*_i|y_i)$ represents our uncertainty in the label $y_i$, and $q(y^*_i|x_i, \theta)$ represents our uncertainty in the model's prediction $F_\theta(x_i)$.
Also, note that minimizing the KL divergence is equivalent to minimizing the cross entropy, 
\begin{align}
  \begin{split}
    \hat{\theta} &= \arg\min_\theta \sum_{i=0}^N D\left(p(y^*_i|y_i)\|q(y^*_i|x_i, \theta)\right) \\
    &= \arg\min_\theta -\sum_{i=0}^N \left(\int_{-\infty}^\infty p(y^*_i|y_i)\log q(y^*_i|x_i, \theta) dy^*_i \right)
  \end{split}
\end{align}
since the entropy of $p(y^*_i|y_i)$ is constant.
If $p(y^*_i|y_i)$ is a Dirac delta function centered on $y_i$, i.e. the label contains zero noise, minimizing the cross entropy is equivalent to minimizing the negative log likelihood of $q(y_i|x_i, \theta)$.
Therefore, finding $\hat{\theta}$ by minimizing the KL divergence is exactly the maximum likelihood estimate of $\theta$ when $y^*_i = y_i$.

Let us assume both the labels and the predictions are contaminated with outliers, i.e. both $\epsilon_1$ and $\epsilon_2$ are drawn from Laplace distributions.
The corresponding probability densities are
\begin{equation}
    p(y^*_i|y_i) = \frac{1}{2b_1} \exp\left(-\frac{|y^*_i - y_i|}{b_1}\right)
\end{equation}
and
\begin{equation}
    q(y^*_i|x_i, \theta) = \frac{1}{2b_2} \exp\left(-\frac{|y^*_i - F_\theta(x_i)|}{b_2}\right)
\end{equation}
where $b_1 \in \mathbb{R}^+$ and $b_2 \in \mathbb{R}^+$ define the scale of the label uncertainty and prediction uncertainty, respectively.
Furthermore, the KL divergence becomes
\begin{align}
  \begin{split}
    &D\left(p(y^*_i|y_i)\|q(y^*_i|x_i, \theta)\right) \\
    &\quad = \frac{b_1 \exp\left(-\frac{\left|y_i - F_\theta(x_i)\right|}{b_1} \right) + \left|y_i - F_\theta(x_i)\right|}{b_2} + \log\frac{b_2}{b_1} - 1
  \end{split}
  \label{eqn:kld}
\end{align}
by integrating over all values of $y^*_i$.
For a derivation, please refer to Appendix~\ref{sec:derivation}.
In the following sections, we propose a loss derived from the KL divergence of Laplace distributions, show that it is related to the Huber loss, and use the relationship to gain further insight into the Huber loss.

\subsection{Proposed Loss Function}
\label{sec:proposed-loss}

We propose the following loss function:
\begin{equation}
    D_{\alpha,\beta}(x) = \frac{\alpha\exp\left(-\frac{|x|}{\alpha}\right) + |x| - \alpha}{\beta}
    \label{eqn:kl_loss}
\end{equation}
which is derived from the KL divergence of Laplace distributions (Equation \eqref{eqn:kld}) by removing the existing constant terms and by adding a new constant term to ensure the minimum value is always zero.
The variable $x$ is equal to the difference in the means of the Laplace distributions.
The parameter $\alpha \in \mathbb{R}^+$ directly corresponds to the scale of the noise in the label ($b_1$), and $\beta \in \mathbb{R}^+$ corresponds to the scale of the noise in the prediction ($b_2$).
As a result, the parameters, $\alpha$ and $\beta$, have an intuitive and probabilistic interpretation related to the variance of the Laplace distributions.
Since our modification to Equation \eqref{eqn:kld} simply removes the constant penalty for the mismatch in the standard deviation of the distributions, our loss function is identical to the KL divergence when $b_1 = b_2 = \alpha = \beta$.

\subsection{Relationship to the Huber Loss}
\label{sec:relationship}

To demonstrate the relationship between our proposed loss and the Huber loss, let us start by considering the behavior of our loss function when $|x|$ is small with respect to $\alpha$.
The second-order approximation of Equation \eqref{eqn:kl_loss} about zero is
\begin{equation}
    D_{\alpha,\beta}(x) \approx D_{\alpha,\beta}(0) + D'_{\alpha,\beta}(0) x + \frac{D''_{\alpha,\beta}(0)}{2} x^2 = \frac{1}{2 \alpha \beta} x^2\text{.}
\end{equation}
Refer to Appendix~\ref{sec:derivatives}, for a derivation of the derivatives and a proof of their existence at $x = 0$.
Furthermore, when $|x|$ is large with respect to $\alpha$ the exponential term in Equation \eqref{eqn:kl_loss} goes to zero,
\begin{equation}
    D_{\alpha,\beta}(x) \approx \frac{|x| - \alpha}{\beta}\text{.}
\end{equation}
As a result, Equation \eqref{eqn:kl_loss} can be approximated using the following piecewise function:
\begin{equation}
    D_{\alpha,\beta}(x) \approx 
    \begin{cases}
        \frac{1}{2 \alpha \beta} x^2, & |x| \leq \alpha \\
        \frac{|x| - \alpha}{\beta}, & |x| > \alpha\text{.}
    \end{cases}
\end{equation}
Like the Huber loss, our proposed loss behaves quadratically when the residual is small and linearly when the residual is large.
In addition, the following configurations tightly bound the Huber loss:
\begin{equation}
    D_{\alpha,\sfrac{1}{\alpha}}(x) \leq H_\alpha(x) \leq D_{\sfrac{\alpha}{2},\sfrac{1}{\alpha}}(x)\text{.}
\end{equation}
The relationship between the loss functions is illustrated in Figure~\ref{fig:losses}, and a formal proof of the bounds is provided in Appendix~\ref{sec:proofs}.

Minimizing the Huber loss with parameter $\alpha$ is equivalent to minimizing an upper-bound on the KL divergence of two Laplace distributions when the scale of the label distribution $b_1 = \alpha$, and the scale of the prediction distribution $b_2 = \sfrac{1}{\alpha}$.
Conversely, minimizing the KL divergence of two Laplace distributions with $b_1 = \sfrac{\alpha}{2}$ and $b_2 =\sfrac{1}{\alpha}$ is equivalent to minimizing an upper-bound on the Huber loss with parameter $\alpha$.
We believe this alternative probabilistic interpretation of the Huber loss provides significant insight into the parameter $\alpha$, which we demonstrate in the remaining sections.

\begin{figure*}
\begin{tabular}{C{3em}C{21.75em}C{21.75em}}
  \begin{tabular}{l}$\alpha = 0.1$\end{tabular} & \includegraphics[width=\linewidth]{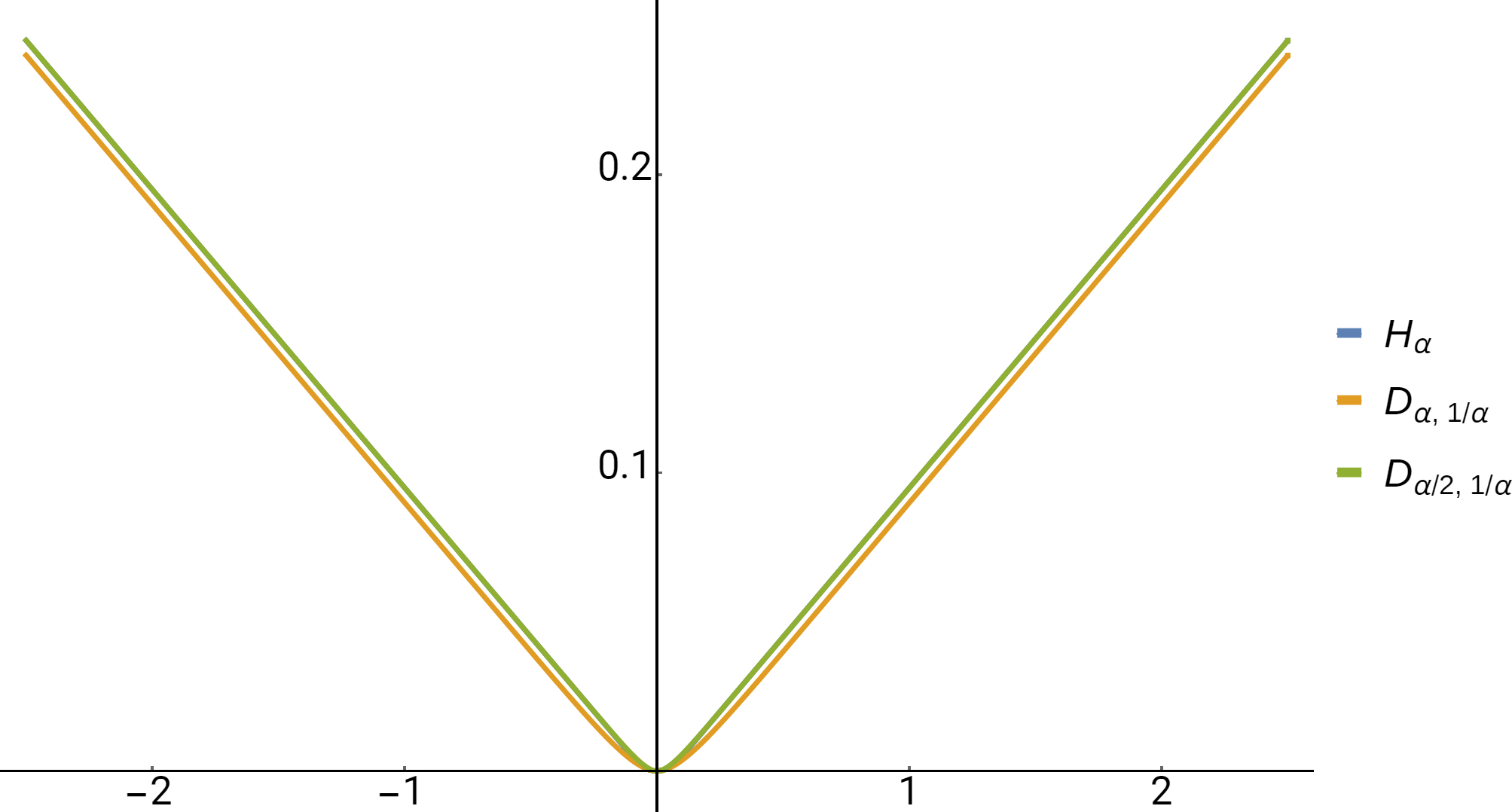} & \includegraphics[width=\linewidth]{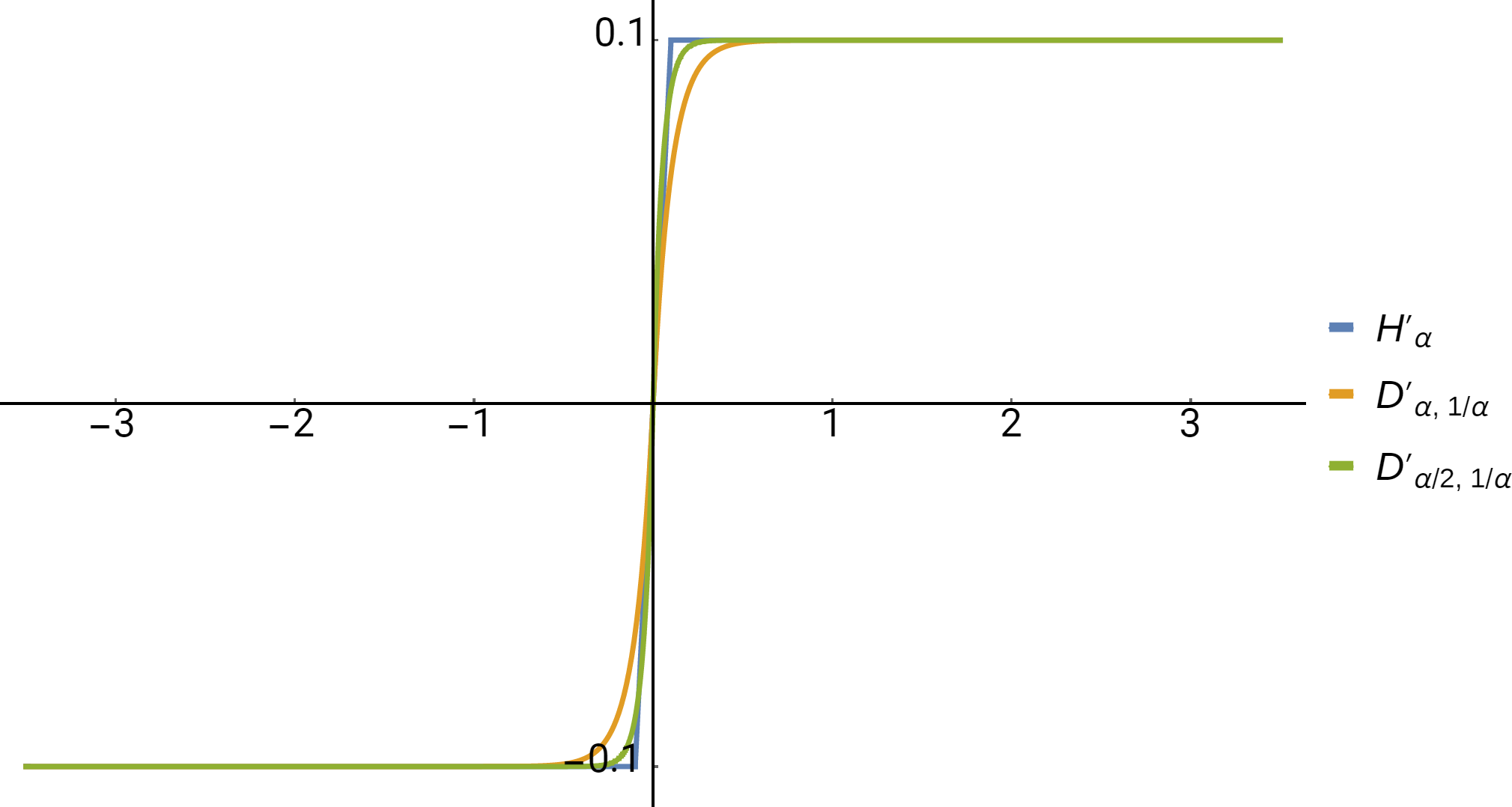} \\
  \begin{tabular}{l}$\alpha = 1$\end{tabular} & \includegraphics[width=\linewidth]{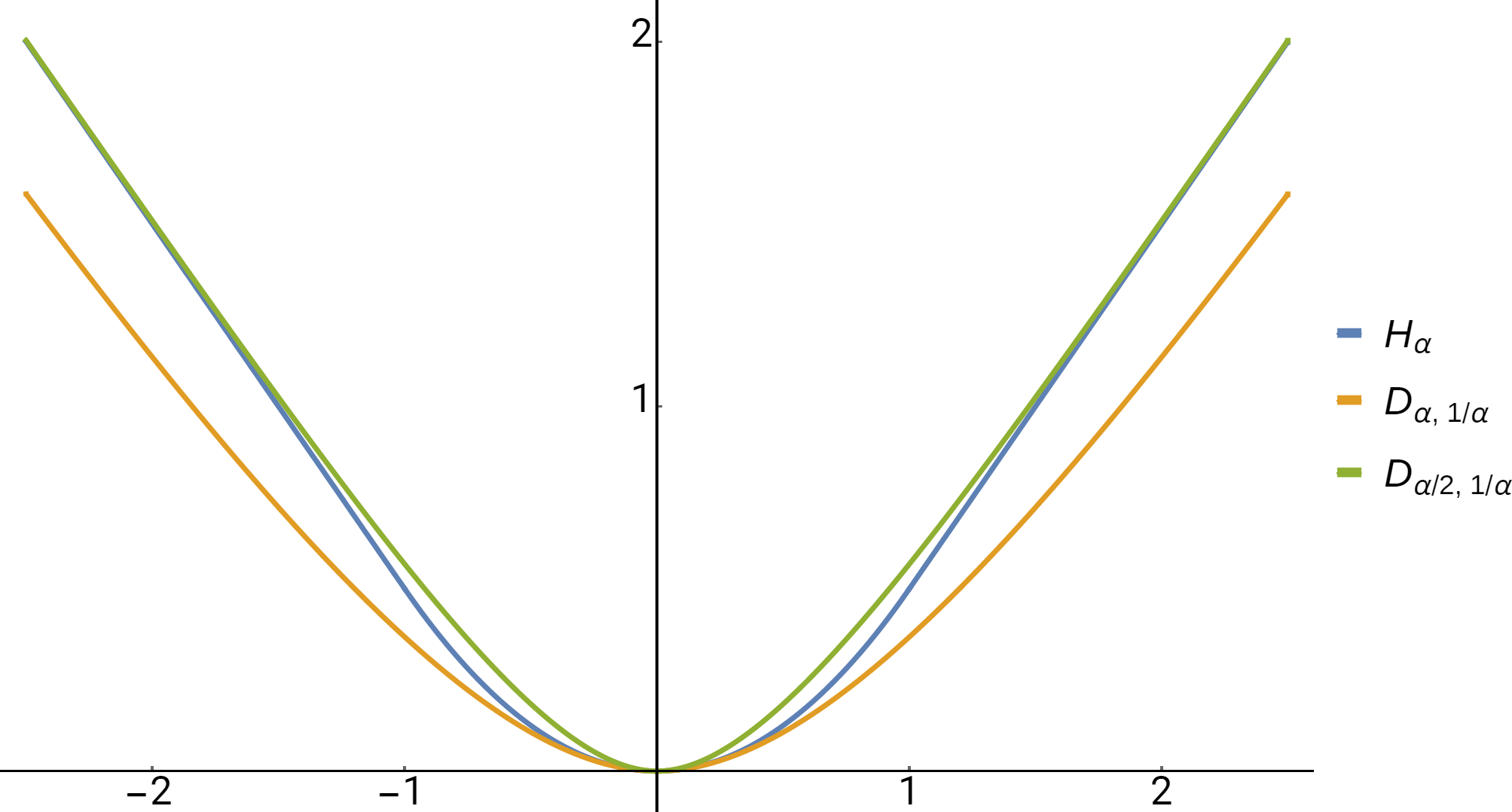} & \includegraphics[width=\linewidth]{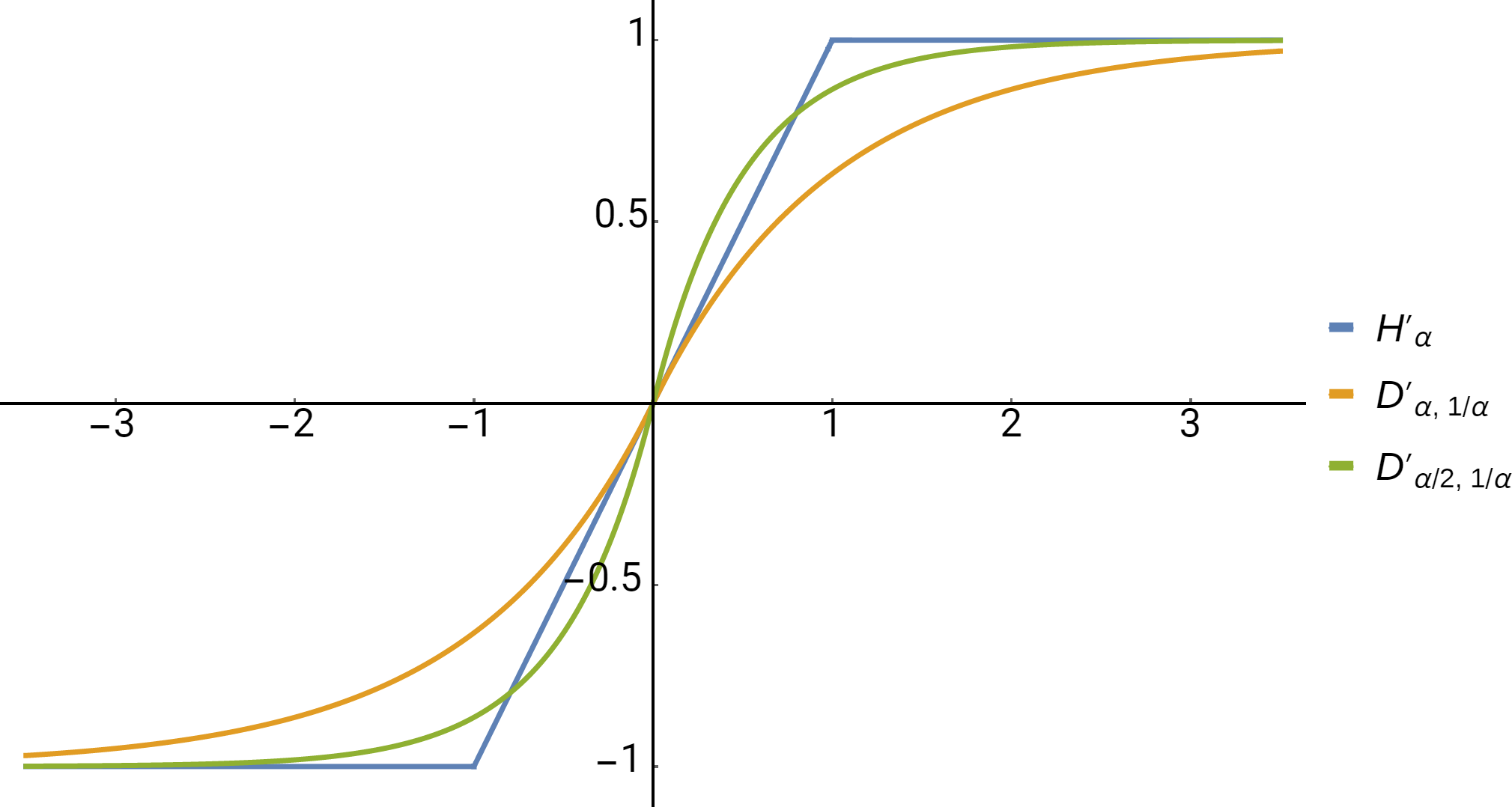} \\
  \begin{tabular}{l}$\alpha = 10$\end{tabular} & \includegraphics[width=\linewidth]{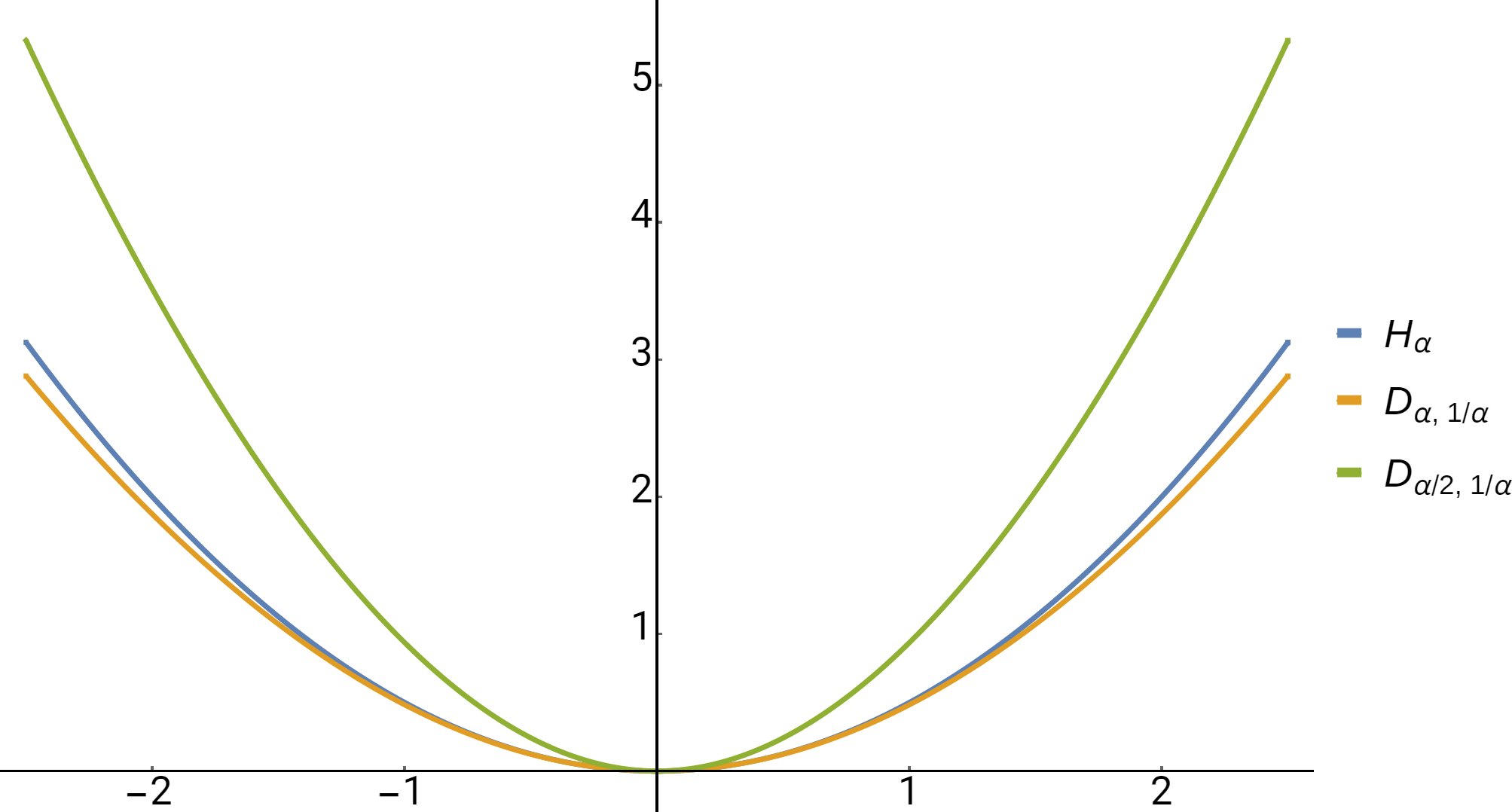} & \includegraphics[width=\linewidth]{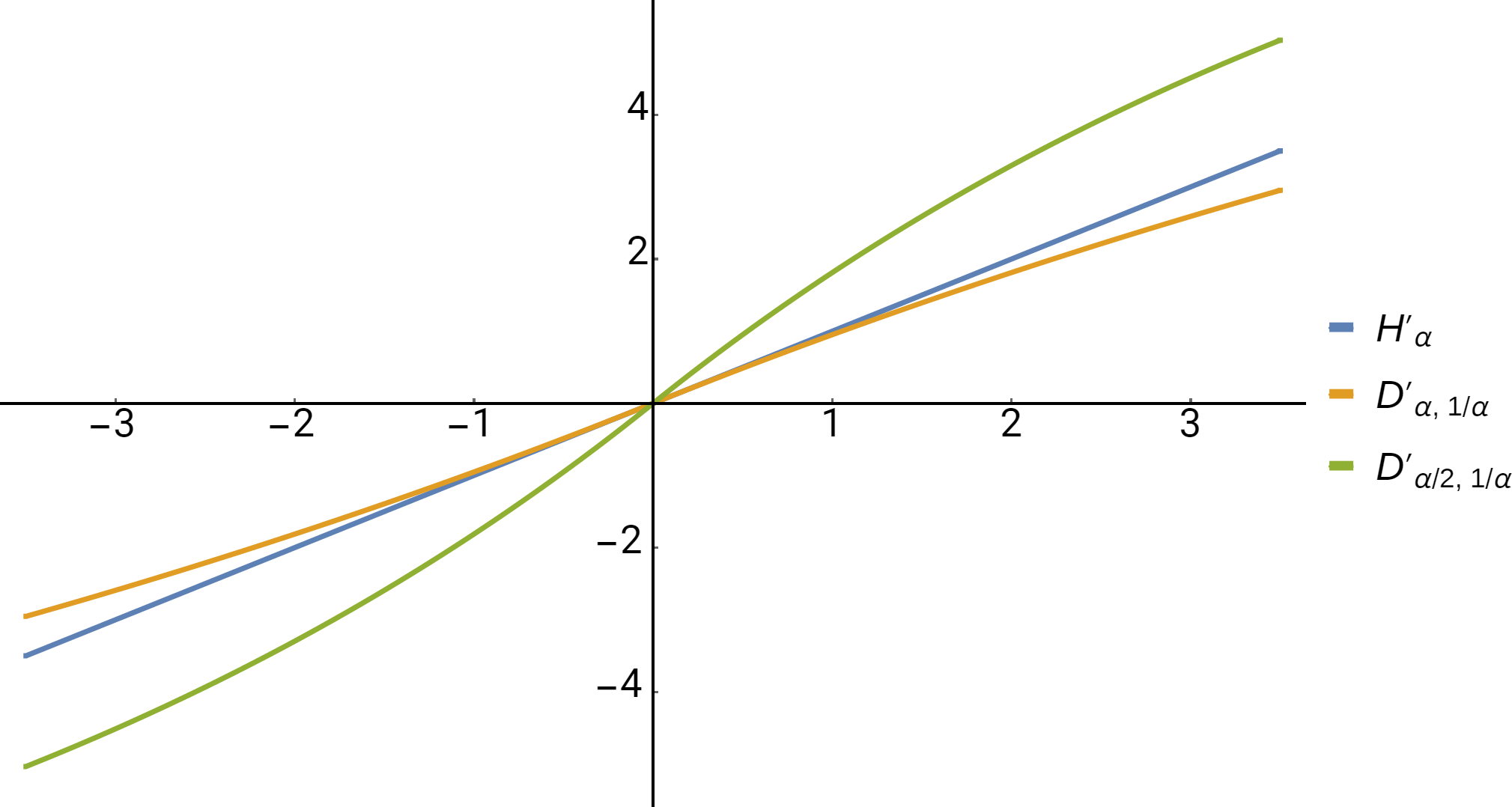} \\
  & Loss Functions & Derivatives
\end{tabular}
\caption{A comparison between the Huber loss ($H_\alpha$) and our proposed loss ($D_{\alpha,\beta}$) derived from the KL divergence of Laplace distributions. The loss function $H_\alpha$ is lower-bounded by $D_{\alpha, \sfrac{1}{\alpha}}$ and upper-bounded by $D_{\sfrac{\alpha}{2}, \sfrac{1}{\alpha}}$. The left column depicts the loss functions and the right column visualizes their derivatives. In addition, each row of the figure depicts a different set of hyper-parameters.}
\label{fig:losses}
\vspace{1em}
\end{figure*}

\subsection{Properties of the Loss Function}

Notice that scaling $x$ by a positive real number, $\gamma \in \mathbb{R}^+$, is equivalent to $D_{\alpha,\beta}(\gamma x) = D_{\sfrac{\alpha}{\gamma},\sfrac{\beta}{\gamma}}(x)$ whereas scaling the loss by $\lambda \in  \mathbb{R}^+$ is equivalent to $\lambda D_{\alpha,\beta}(x) = D_{\alpha,\sfrac{\beta}{\lambda}}(x)$.
Both of these properties are trivial to show through algebraic manipulation. 

In the following sections, we will analyze the Huber loss with the approximation $H_\alpha(x) \approx D_{\alpha,\sfrac{1}{\alpha}}(x)$. 
Combining the above properties with the approximation, we observe that
\begin{equation}
    \lambda H_\alpha(\gamma x) \approx D_{\sfrac{\alpha}{\gamma},\sfrac{1}{\alpha\gamma\lambda}}(x)\text{.}
    \label{eqn:property}
\end{equation}
Therefore, scaling the input to the Huber loss is equivalent to inversely scaling the label and prediction distributions, and scaling the output is equivalent to inversely scaling the prediction distribution.

\section{Toy Problem: Polynomial Fitting}
\label{sec:toy-problem}

\begin{figure*}
\centering
\begin{subfigure}{0.32\textwidth}
  \centering
  \includegraphics[width=\linewidth]{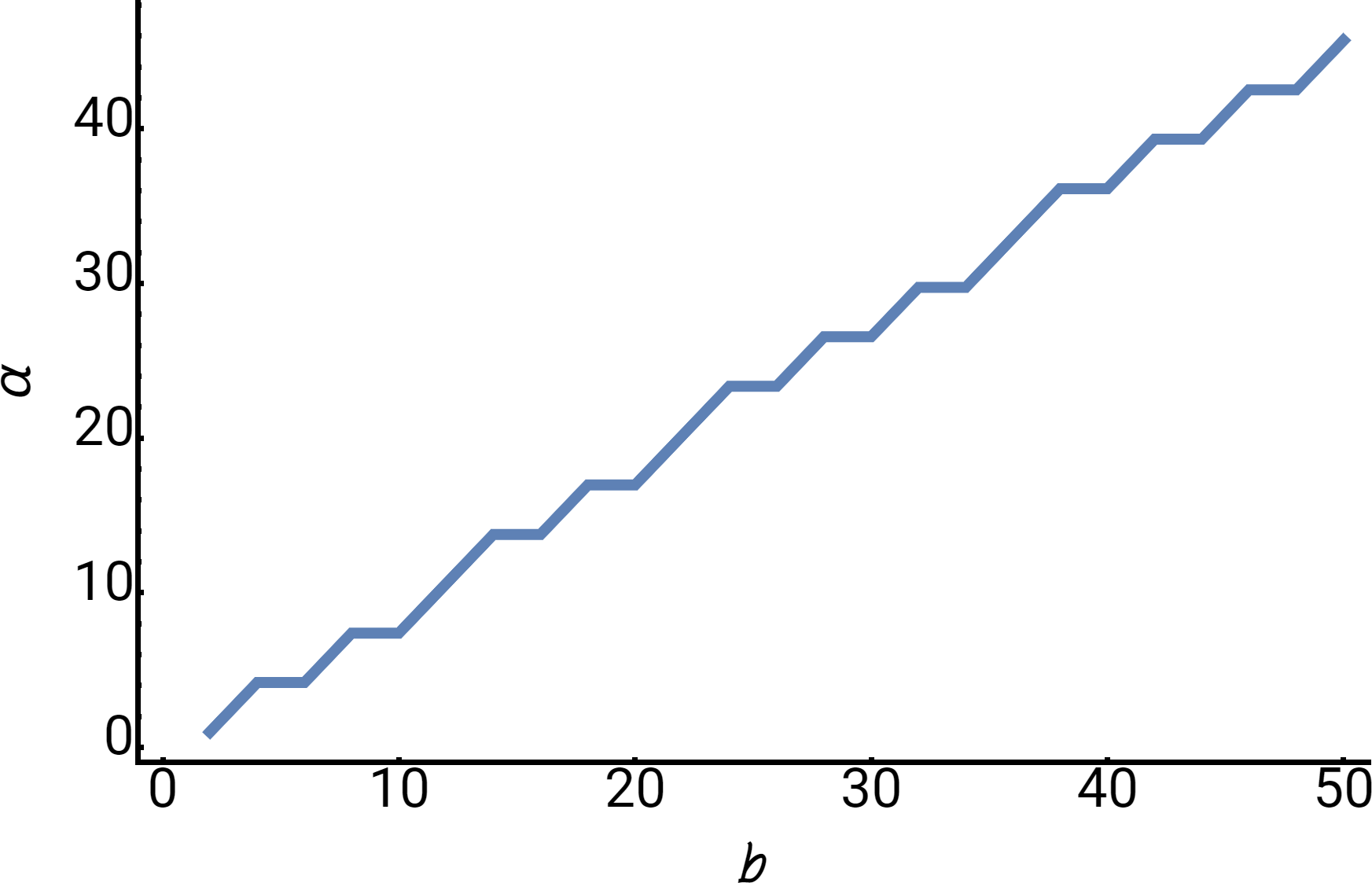}
  \caption{Laplace Distribution}
\end{subfigure}
\begin{subfigure}{0.32\textwidth}
  \centering
  \includegraphics[width=\linewidth]{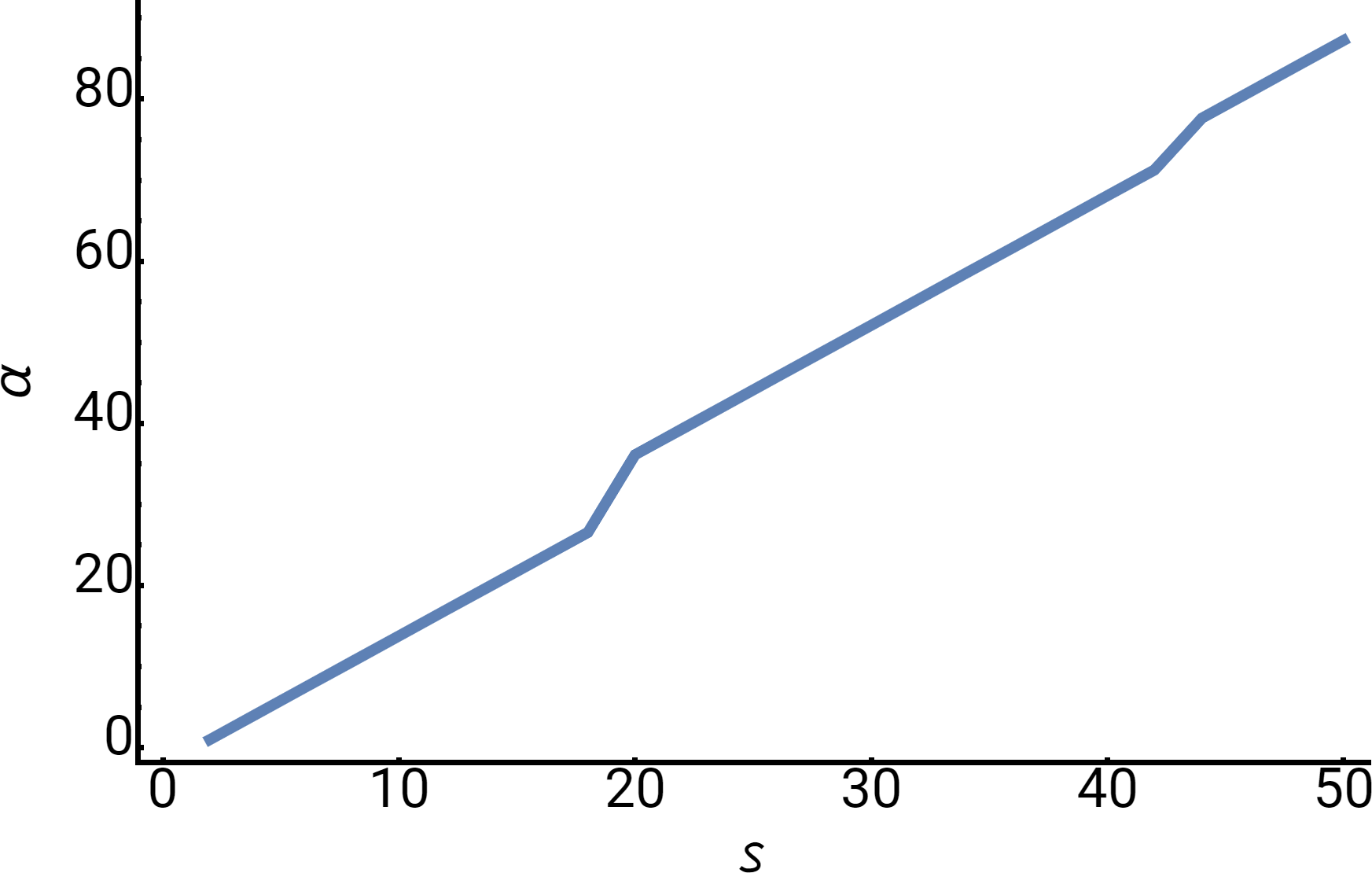}
  \caption{Logistic Distribution}
\end{subfigure}
\begin{subfigure}{0.32\textwidth}
  \centering
  \includegraphics[width=\linewidth]{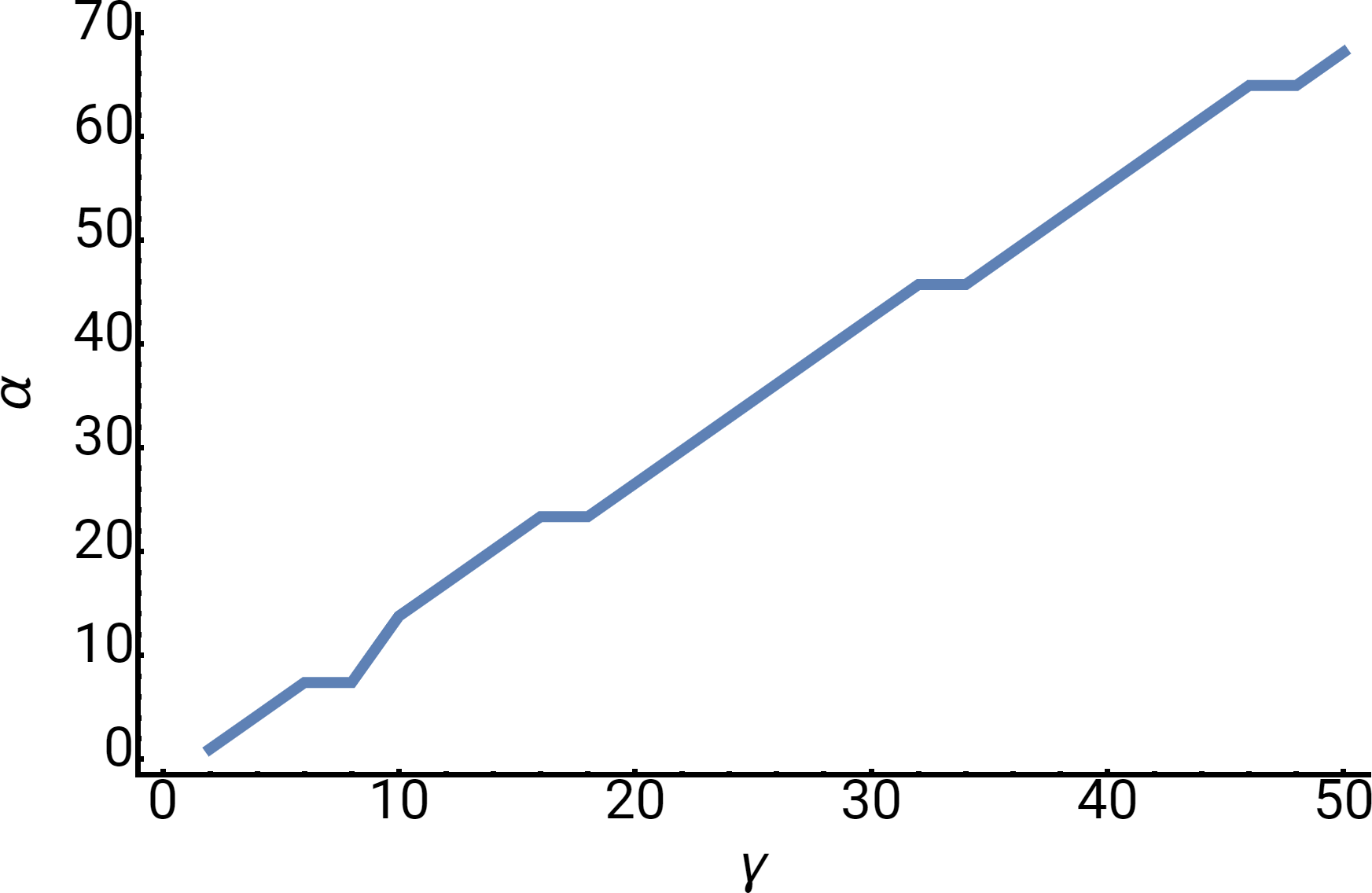}
  \caption{Cauchy Distribution}
\end{subfigure}
\caption{The results of our toy problem. The $x$-axis is the parameter that controls the scale of the respective noise distributions and the y-axis is the optimal $\alpha$ parameter for the Huber loss. For each distribution, there is an approximate linear relationship between the noise and the optimal parameter.}
\label{fig:toy-problem}
\end{figure*}

Our proposed interpretation of the Huber loss suggests there is a relationship between the parameter of the loss and the scale of the label noise.
We would like to determine whether or not knowing this relationship and having a good estimate for the noise can help us select a good parameter for the loss.
We employ a toy problem, where we control the amount of label noise, to show that the optimal $\alpha$ parameter is closely related to the scale of the label noise.

For our toy problem, we fit a one-dimensional polynomial to a set of sample points.
To create our training examples, $\mathcal{D} = \{x_i, y_i\}_{i=0}^N$, we randomly sample $x_i \in [-\delta, \delta]$ uniformly and generate its corresponding label as $y_i = F_{\theta^*}(x_i) + \epsilon$ where $\theta^* \in \mathbb{R}^D$ are the parameters of the ground-truth polynomial and $\epsilon$ is noise randomly drawn from a distribution we specify.
To create our test set, we use the same process but with $\epsilon = 0$.

We estimate the parameters of the predicted polynomial, $\hat{\theta} \in \mathbb{R}^{K}$, by minimizing the sum of the Huber loss over the training examples.
For many tasks, the exact model for the data is unknown, and the mismatch between the true and estimated model could be a source of noise in the predictions.
For this reason, we set $K > D$.

To evaluate the estimated parameters, we compute the root mean square error (RMSE) over the test set.
We use gradient descent to identity $\hat{\theta}$, and we use grid search to identify the optimal learning rate and $\alpha$ parameter.

For each experiment, we sample $N = 10000$ points with $\delta = 2$.
We arbitrarily select the parameters of our ground-truth polynomial as $\theta^* = [6, -3, -25, 15, 20, -10]$; therefore, $D=6$ and $F_{\theta^*}(x) = 6x^5 - 3x^4 - 25x^3 + 15x^2 + 20x - 10$.
Furthermore, we set $K=D+2$.
Since the Huber loss is designed to be robust to outliers, we sample $\epsilon$ from three different heavy-tailed distributions: the Laplace, Logistic, and Cauchy distributions.
The results of our experiments are shown in Figure \ref{fig:toy-problem}, which illustrates there is a near linear relationship between the scale of the label noise and the optimal $\alpha$ parameter for each of the distributions.
This suggests that knowing or estimating the label noise can enable us to identify a suitable $\alpha$ parameter.

Next, we will demonstrate with a real-world problem that approximating the label noise when it is unknown is an intuitive and effective method for selecting well-suited hyper-parameters.

\section{Case Study: Faster R-CNN}
\label{sec:case-study}

With our proposed interpretation, we analyze the loss functions used in a modern object detector, Faster R-CNN~\cite{faster-rcnn}, which is arguably one of the most important advancements in object detection in recent history.
Their work has inspired the development of several other object detectors including SSD~\cite{ssd}, FPN~\cite{fpn}, RetinaNet~\cite{retinanet}, and Mask R-CNN~\cite{mask-rcnn}, all of which leverage the same loss functions for bounding box regression.

The Faster R-CNN network architecture consists of two primary parts, a region proposal network and an object detection network.
The proposal network identifies regions that may contain objects, and the detection network refines and classifies the proposed regions.
To regress a bounding box, both the proposal network and the detection network utilize the Huber loss.
In their work, a bounding box is parameterized by its center and dimensions.
Let us start by analyzing the center prediction; the target for the $x$-coordinate of the center is
\begin{equation}
    t^*_x = \frac{x^* - x_a}{w_a}
\end{equation}
where $x^*$ is the $x$-coordinate of the ground-truth center, $x_a$ is the $x$-coordinate of the corresponding anchor, and $w_a$ is the width of the anchor.
A similar target is used for the center's $y$-coordinate except the height of the anchor is used instead of the width.
For the proposal network, the anchors are predefined, whereas the detection network uses the proposals as its anchors.

In the paper, the authors state that they use $\lambda H_1(t_x - t^*_x)$ to penalize the model's prediction, $t_x$, during training where $\lambda = 10$ is a weighting parameter \cite{faster-rcnn}.
To interpret this loss, let us first re-write the residual in terms of the center displacement,
\begin{align}
  \begin{split}
    t_x - t^*_x &= t_x - \frac{x^* - x_a}{w_a} \\
    &= \frac{\left(t_x w_a + x_a\right) - x^*}{w_a} = \frac{x - x^*}{w_a}
  \end{split}
\end{align}
where $x = t_x w_a + x_a$ is the predicted $x$-coordinate of the center.
Utilizing Equation \eqref{eqn:property}, we see that $\lambda H_1(t_x - t^*_x) \approx D_{w_a,\sfrac{w_a}{\lambda}}(x - x^*)$.
Based on this interpretation, the scale of noise in the prediction is one-tenth the width of the anchor, but the scale of the label uncertainty is the full width of the anchor.
Obviously, assuming the labels contain this amount of uncertainty is inappropriate.
As it happens, the loss function and targets used in the current implementation of Faster R-CNN differ significantly from the paper \cite{faster-rcnn-implementation}.
Interpreting the implementation is important because it is the foundation for several other object detectors \cite{mask-rcnn,fpn,retinanet,ssd}.

In the implementation of Faster R-CNN \cite{faster-rcnn-implementation}, the authors scale the Huber loss by $\sfrac{1}{\alpha}$.
Furthermore, the ground-truth targets have been shifted and scaled,
\begin{equation}
    \tilde{t}^*_x = \frac{t^*_x - \mu_x}{\sigma_x}
\end{equation}
by constant values $\mu_x \in \mathbb{R}$ and $\sigma_x \in \mathbb{R}^+$.
Let us repeat our analysis with these modifications.
Like before, we begin with re-writing the residual,
\begin{align}
  \begin{split}
    t_x - \tilde{t}^*_x &= t_x + \frac{\mu_x}{\sigma_x} - \frac{x^* - x_a}{\sigma_x w_a} \\
    &= \frac{\left[\left(t_x \sigma_x + \mu_x \right) w_a + x_a\right] - x^*}{\sigma_x w_a} = \frac{\tilde{x} - x^*}{\sigma_x w_a}
  \end{split}
\end{align}
where $\tilde{x} = \left(t_x \sigma_x + \mu_x \right) w_a + x_a$.
Next, let us consider the relationship between their loss function and our proposed loss function:
\begin{equation}
    \frac{\lambda}{\alpha} H_\alpha(t_x - \tilde{t}^*_x) \approx D_{\alpha \sigma_x w_a, \sfrac{\sigma_x w_a}{\lambda}}(\tilde{x} - x^*)\text{.}
\end{equation}
With these additional complexities, the authors were unknowingly able to independently manipulate the scale of the label and prediction noise.
To train the proposal network, $\lambda = 1$, $\alpha = \sfrac{1}{9}$, and $\sigma_x = 1$, and to train the detection network $\lambda=1$, $\alpha = 1$, and $\sigma_x = \sfrac{1}{10}$.
For both networks, the scale of the label noise is similar, a ninth and tenth of the anchor width, which is a much more reasonable assumption.
With this interpretation, the scale of prediction uncertainty is significantly larger for the proposal network compared to the detection network, the full width of the anchor versus a tenth of the width.
Intuitively, it makes sense to have a smaller prediction uncertainty for the detection network because it is designed to refine the output of the proposal network; however, a proposal uncertainty of this magnitude may be too extreme.

Likewise, we can perform the same analysis for the dimensions of the bounding box.
The target for the width of the bounding box is
\begin{equation}
    t^*_w = \log\frac{w^*}{w_a}
\end{equation}
and there is a similar target for the height of the bounding box.
As before, the target is shifted and scaled by $\mu_w \in \mathbb{R}$ and $\sigma_w \in \mathbb{R}^+$,
\begin{equation}
    \tilde{t}^*_w = \frac{t^*_w - \mu_w}{\sigma_w}\text{.}
\end{equation}
By re-writing the difference, we obtain the following:
\begin{align}
  \begin{split}
    t_w - \tilde{t}^*_w &= t_w - \frac{\log w^* - \log w_a - \mu_w}{\sigma_w} \\
    &= \frac{\left(t_w \sigma_w + \mu_w + \log w_a\right) - \log w^*}{\sigma_w} \\
    &= \frac{\log \tilde{w} - \log w^*}{\sigma_w}
  \end{split}
\end{align}
where $\tilde{w} = \exp\left(t_w \sigma_w + \mu_w \right) w_a$ is the predicted width of the bounding box.
Since the log of the width can be difficult to interpret, let us consider the following approximation:
\begin{equation}
    \log\frac{w^*}{w_a} \approx \frac{w^*}{w_a} - 1
\end{equation}
which is the first-order approximation of the logarithm when $\sfrac{w^*}{w_a} \approx 1$.
This is not an outlandish assumption because the intersection-over-union (IoU) between the anchor and the ground-truth bounding box needs to be significant for the ground-truth to be matched with the anchor.\footnote{Refer to Appendix~\ref{sec:target-approximation} for experimental validation of the target approximation.}
Now, the difference can be approximated as
\begin{align}
  \begin{split}
    t_w - \tilde{t}^*_w &\approx t_w + \frac{\mu_w + 1}{\sigma_w} - \frac{w^*}{\sigma_w w_a} \\
    &\approx \frac{\left(t_w \sigma_w + \mu_w + 1\right) w_a - w^*}{\sigma_w w_a} \approx \frac{\tilde{w} - w^*}{\sigma_w w_a}
  \end{split}
\end{align}
where $\tilde{w} \approx \left(t_w \sigma_w + \mu_w + 1\right) w_a$, which conforms with the first-order approximation of the exponential function when $t_w \sigma_w + \mu_w \approx 0$.
Leveraging our interpretation of the Huber loss, we observe
\begin{align}
  \begin{split}
    \frac{\lambda}{\alpha} H_\alpha(t_w - \tilde{t}^*_w) &\approx D_{\alpha \sigma_w, \sfrac{\sigma_w}{\lambda}}(\log \tilde{w} - \log w^*) \\
    &\approx D_{\alpha \sigma_w w_a, \sfrac{\sigma_w w_a}{\lambda}}(\tilde{w} - w^*)\text{.}
  \end{split}
\end{align}
In this case, $\lambda = 1$, $\alpha = \sfrac{1}{9}$, and $\sigma_w = 1$ for the proposal network, and $\lambda = 1$, $\alpha = 1$, and $\sigma_w = \sfrac{1}{5}$ for the detection network.
Interestingly, the label noise is assumed to be higher for the detection network compared to the proposal network, which could be less than optimal.

It is unclear how the authors arrived at these peculiar hyper-parameters, undoubtedly through some form of parameter sweep.
Based on our interpretation, we believe the hyper-parameters could be improved upon, which we demonstrate in the following section.
In general, we believe that our interpretation can aid in hyper-parameter selection by eliminating inappropriate values.

\section{Experiments}
\label{sec:experiments}

\begin{table*}[t]
  \centering
  \caption{List of Hyper-Parameters}
  \vspace{-0.5em}
  \scalebox{0.88}{
    \begin{tabular}{c|cc|cc|cc|cc|cc}
      \multirow{2}{*}{Parameters} & \multicolumn{2}{c|}{Publication} & \multicolumn{2}{c|}{Implementation} & \multicolumn{2}{c|}{Experiment A} & \multicolumn{2}{c|}{Experiment B} & \multicolumn{2}{c}{Experiment C} \\
      & Proposal & Detection & Proposal & Detection & Proposal & Detection & Proposal & Detection & Proposal & Detection \\
      \hline
      \hline
      $\lambda$ & $10$ & $10$ & $1$ & $1$ & $\sfrac{1}{4}$ & $\sfrac{1}{2}$ & $\sfrac{1}{2}$ & $1$ & $\sfrac{1}{4}$ & $1$ \\
      $\alpha$ & $1$ & $1$ & $\sfrac{1}{9}$ & $1$ & $1$ & $1$ & $1$ & $1$ & $1$ & $1$ \\
      $\sigma_x$ & $1$ & $1$ & $1$ & $\sfrac{1}{10}$ & $\sfrac{1}{20}$ & $\sfrac{1}{20}$ & $\sfrac{1}{20}$ & $\sfrac{1}{20}$ & $\sfrac{1}{20}$ & $\sfrac{1}{20}$ \\
      $\sigma_y$ & $1$ & $1$ & $1$ & $\sfrac{1}{10}$ & $\sfrac{1}{20}$ & $\sfrac{1}{20}$ & $\sfrac{1}{20}$ & $\sfrac{1}{20}$ & $\sfrac{1}{20}$ & $\sfrac{1}{20}$ \\
      $\sigma_w$ & $1$ & $1$ & $1$ & $\sfrac{1}{5}$ & $\sfrac{1}{10}$ & $\sfrac{1}{10}$ & $\sfrac{1}{10}$ & $\sfrac{1}{10}$ & $\sfrac{1}{10}$ & $\sfrac{1}{10}$ \\
      $\sigma_h$ & $1$ & $1$ & $1$ & $\sfrac{1}{5}$ & $\sfrac{1}{10}$ & $\sfrac{1}{10}$ & $\sfrac{1}{10}$ & $\sfrac{1}{10}$ & $\sfrac{1}{10}$ & $\sfrac{1}{10}$ \\
    \end{tabular}
    \label{tab:hyper-parameters}
  }
  \vspace{1em}
  \caption{Interpreted Scale of the Label and Prediction Uncertainties}
  \vspace{-0.5em}
  \scalebox{0.88}{
    \begin{tabular}{c|cc|cc|cc|cc|cc}
      \multirow{2}{*}{Bounding Box} & \multicolumn{2}{c|}{Publication} & \multicolumn{2}{c|}{Implementation} & \multicolumn{2}{c|}{Experiment A} & \multicolumn{2}{c|}{Experiment B} & \multicolumn{2}{c}{Experiment C} \\
      & Proposal & Detection & Proposal & Detection & Proposal & Detection & Proposal & Detection & Proposal & Detection \\
      \hline
      \hline
      $x^*$ & $w_a$ & $w_a$ & $\sfrac{w_a}{9}$ & $\sfrac{w_a}{10}$ & $\sfrac{w_a}{20}$ & $\sfrac{w_a}{20}$ & $\sfrac{w_a}{20}$ & $\sfrac{w_a}{20}$ & $\sfrac{w_a}{20}$ & $\sfrac{w_a}{20}$ \\
      $y^*$ & $h_a$ & $h_a$ & $\sfrac{h_a}{9}$ & $\sfrac{h_a}{10}$ & $\sfrac{h_a}{20}$ & $\sfrac{h_a}{20}$ & $\sfrac{h_a}{20}$ & $\sfrac{h_a}{20}$ & $\sfrac{h_a}{20}$ & $\sfrac{h_a}{20}$ \\
      $w^*$ & $w_a$ & $w_a$ & $\sfrac{w_a}{9}$ & $\sfrac{w_a}{5}$ & $\sfrac{w_a}{10}$ & $\sfrac{w_a}{10}$ & $\sfrac{w_a}{10}$ & $\sfrac{w_a}{10}$ & $\sfrac{w_a}{10}$ & $\sfrac{w_a}{10}$ \\
      $h^*$ & $h_a$ & $h_a$ & $\sfrac{h_a}{9}$ & $\sfrac{h_a}{5}$ & $\sfrac{h_a}{10}$ & $\sfrac{h_a}{10}$ & $\sfrac{h_a}{10}$ & $\sfrac{h_a}{10}$ & $\sfrac{h_a}{10}$ & $\sfrac{h_a}{10}$ \\
      \hline
      $\tilde{x}$ & $\sfrac{w_a}{10}$ & $\sfrac{w_a}{10}$ & $w_a$ & $\sfrac{w_a}{10}$ & $\sfrac{w_a}{5}$ & $\sfrac{w_a}{10}$ & $\sfrac{w_a}{10}$ & $\sfrac{w_a}{20}$ & $\sfrac{w_a}{5}$ & $\sfrac{w_a}{20}$ \\
      $\tilde{y}$ & $\sfrac{h_a}{10}$ & $\sfrac{h_a}{10}$ & $h_a$ & $\sfrac{h_a}{10}$ & $\sfrac{h_a}{5}$ & $\sfrac{h_a}{10}$ & $\sfrac{h_a}{10}$ & $\sfrac{h_a}{20}$ & $\sfrac{h_a}{5}$ & $\sfrac{h_a}{20}$ \\
      $\tilde{w}$ & $\sfrac{w_a}{10}$ & $\sfrac{w_a}{10}$ & $w_a$ & $\sfrac{w_a}{5}$ & $\sfrac{2w_a}{5}$ & $\sfrac{w_a}{5}$ & $\sfrac{w_a}{5}$ & $\sfrac{w_a}{10}$ & $\sfrac{2w_a}{5}$ & $\sfrac{w_a}{10}$ \\
      $\tilde{h}$ & $\sfrac{h_a}{10}$ & $\sfrac{h_a}{10}$ & $h_a$ & $\sfrac{h_a}{5}$ & $\sfrac{2h_a}{5}$ & $\sfrac{h_a}{5}$ & $\sfrac{h_a}{5}$ & $\sfrac{h_a}{10}$ & $\sfrac{2h_a}{5}$ & $\sfrac{h_a}{10}$ \\
    \end{tabular}
    \label{tab:uncertainties}
  }
\end{table*}

\begin{table}[t]
  \centering
  \caption{Faster R-CNN Performance}
  \vspace{-0.5em}
  \scalebox{0.88}{
    \begin{tabular}{c|ccc}
      \multirow{2}{*}{Parameters} & \multicolumn{3}{c}{Mean Average Precision (mAP) @} \\
      & 0.5 IoU & 0.75 IoU & 0.5-0.95 IoU \\
      \hline
      \hline
      Baseline \cite{faster-rcnn} & 41.5 & - & 21.2 \\
      \hline
      Publication & 42.8 & 18.7 & 21.0 \\
      Implementation & \textbf{44.7} & 23.1 & 23.8 \\
      \hline
      Experiment A & \textbf{44.7} & 24.0 & 24.2 \\
      Experiment B & 44.2 & \textbf{25.0} & 24.6 \\
      Experiment C & 44.6 & 24.9 & \textbf{24.7} \\
    \end{tabular}
    \label{tab:faster-results}
  }
\end{table}

\begin{table}[t]
  \centering
  \caption{RetinaNet Performance}
  \vspace{-0.5em}
  \scalebox{0.88}{
    \begin{tabular}{c|ccc}
      \multirow{2}{*}{Parameters} & \multicolumn{3}{c}{Mean Average Precision (mAP) @} \\
      & 0.5 IoU & 0.75 IoU & 0.5-0.95 IoU \\
      \hline
      \hline
      Implementation & 60.1 & 42.9 & 40.4 \\
      \hline
      Experiment A & \textbf{60.6} & \textbf{43.5} & \textbf{40.8} \\
      Experiment B & 58.9 & 42.3 & 39.8 \\
    \end{tabular}
    \label{tab:retinanet-results}
  }
\end{table}

In this section, we perform experiments on Faster R-CNN as well as another modern object detector, RetinaNet.
Our goal is not to obtain state-of-the-art object detection performance, there is a wealth of literature that improves upon these methods; instead, our goal is to demonstrate that our proposed interpretation of the Huber loss can lead to hyper-parameters better suited to the task of bounding box regression.
Furthermore, our aim is not to replace the Huber loss with our proposed loss; rather, we want to leverage the relationship between the losses to gain insight into the Huber loss.\footnote{For completion, we demonstrate that replacing the Huber loss with our proposed loss function produces comparable results in Appendix~\ref{sec:loss-experiment}.}
For these reasons, we limit our modifications to the following hyper-parameters: $\alpha$, $\lambda$, $\sigma_x$, $\sigma_y$, $\sigma_w$, and $\sigma_h$ (refer to Section \ref{sec:case-study} for more details).\footnote{The hyper-parameters $\mu_x$, $\mu_y$, $\mu_w$, and $\mu_h$ are all set to zero in the implementation, and they are left unchanged in all the experiments.}

\subsection{Faster R-CNN}
\label{sec:experiments-faster}

To conduct our experiments, we utilize the implementation and framework provided by the authors of Faster R-CNN \cite{faster-rcnn-implementation}.
The deepest neural network supported by their framework is VGG-16 \cite{vgg16}, and the largest dataset is MS-COCO 2014 \cite{coco}.
For all of our experiments, we train the Faster R-CNN model with a VGG-16 backbone on the MS-COCO 2014 training set and measure the object detection performance on the validation set.
The MS-COCO 2014 dataset \cite{coco} contains objects from 80 different classes, and it includes over 80k images for training and 40k images for validation.
The metric used to measure object detection performance is the mean average precision (mAP) at various intersection-over-union (IoU) thresholds.
To evaluate our experiments, we consider the mAP at 0.5 IoU and 0.75 IoU thresholds, as well as, the mAP averaged over 0.5-0.95 IoU thresholds.
Unless otherwise stated, we use the default configurations set by the authors to train and test the models.

For our initial experiment, we train a model using the hyper-parameters as they are described in the publication~\cite{faster-rcnn}.
Afterwards, we evaluate the parameters as they are specified in the current implementation of Faster R-CNN~\cite{faster-rcnn-implementation}.
Lastly, we leverage our interpretation to propose three new sets of hyper-parameters.
A full list of the parameters used as well as their corresponding interpretation is provided in Table~\ref{tab:hyper-parameters} and Table~\ref{tab:uncertainties}, respectively.
Notice for our proposed hyper-parameters, the label noise does not vary between the proposal and detection network, only the prediction noise varies.
Based on our interpretation, we believe the label noise should not change between the two networks while the prediction noise should.

The results of the experiments are presented in Table \ref{tab:faster-results}.
We were unable to exactly reproduce the results as they are listed in \cite{faster-rcnn}, likely due to changes made to the implementation by the authors that are unrelated to the hyper-parameters of the Huber loss.
Regardless, in our experiments, the published hyper-parameters perform the worst by a significant margin, which should not be a surprise given our interpretation.
The authors of Faster R-CNN were able to improve performance of the detector by tuning the hyper-parameters in the implementation \cite{faster-rcnn-implementation}.
We were able to further improve performance by reducing the estimated amount of noise in the labels and predictions.
Specifically, we were able to raise performance at larger IoU thresholds.
Achieving an improvement in mAP at higher thresholds requires more accurate bounding boxes; therefore, it makes sense that reducing the estimated uncertainty increases performance at those thresholds.
Experiment A and B trade-off performance at 0.5 and 0.75 IoU, and Experiment C identifies a good balance between both.
These results are significant because they were obtained by leveraging the intuition provided by our proposed interpretation of the Huber loss without the need for an exhaustive hyper-parameter search.

\subsection{RetinaNet}
\label{sec:experiments-retinanet}

As previously discussed, Faster R-CNN uses two networks or stages to perform object detection.
Whereas, RetinaNet \cite{retinanet} uses only a single stage; therefore, it uses one network to regress the bounding box and classify the objects.
For our experiments, we utilize the official implementation of RetinaNet \cite{detectron2}.
In the RetinaNet implementation, the loss function utilized to regress bounding boxes is identical to the loss function used for the proposal network in the Faster R-CNN implementation \cite{faster-rcnn-implementation}.
For this reason, we repeat Experiments A and B from Section \ref{sec:experiments-faster} with the RetinaNet model.\footnote{Experiment C is not repeated since A and C use the same parameters for the proposal network.}

Since RetinaNet is a more recently proposed detector, the implementation supports more sophisticated backbone networks and newer datasets.
For all of the experiments, we train the RetinaNet model with a ResNet-101 \cite{resnet} backbone on the MS-COCO 2017 \cite{coco} training set and measure the object detection performance on the validation set utilizing the same metrics as Section \ref{sec:experiments-faster}.

The results of the experiments are presented in Table~\ref{tab:retinanet-results}.
Experiment A was able to achieve higher performance across the board, while Experiment B degraded performance significantly at 0.5 IoU.
We observed a similar trend in Section \ref{sec:experiments-faster}.
These results demonstrate that our proposed interpretation can identify well-suited hyper-parameters for a task regardless of the underlying meta-architecture, backbone network, and dataset.

\section{Conclusion}
\label{sec:conclusion}

In this work, we propose an alternative probabilistic interpretation of the Huber loss.
Our interpretation connects the Huber loss to the KL divergence of Laplace distributions, which provides an intuitive understanding of its parameters.
We demonstrated that our interpretation can aid in hyper-parameter selection, and we were able to improve the performance of the Faster R-CNN and RetinaNet object detectors without needing to exhaustively search over hyper-parameters.

The vast majority of recent papers that utilize the Huber loss \cite{chang2018,chen2016,mv3d,alp2017,mask-rcnn,avod,ku2019,fpn,ssd,pixor}, use the formulation as described in the Fast or Faster R-CNN publications \cite{fast-rcnn,faster-rcnn}.
Therefore, these methods as well as future methods have the potential to be improved significantly by leveraging our proposed interpretation of the Huber loss to identify better suited hyper-parameters for their respective tasks.

{\small
\bibliographystyle{ieee_fullname}
\bibliography{egbib}

\begin{thebibliography}{10}\itemsep=-1pt

\bibitem{chang2018}
Jia-Ren Chang and Yong-Sheng Chen.
\newblock Pyramid stereo matching network.
\newblock In {\em Proceedings of the IEEE Conference on Computer Vision and
  Pattern Recognition (CVPR)}, 2018.

\bibitem{chen2016}
Xiaozhi Chen, Kaustav Kundu, Ziyu Zhang, Huimin Ma, Sanja Fidler, and Raquel
  Urtasun.
\newblock Monocular 3{D} object detection for autonomous driving.
\newblock In {\em Proceedings of the IEEE Conference on Computer Vision and
  Pattern Recognition (CVPR)}, 2016.

\bibitem{mv3d}
Xiaozhi Chen, Huimin Ma, Ji Wan, Bo Li, and Tian Xia.
\newblock Multi-view 3{D} object detection network for autonomous driving.
\newblock In {\em Proceedings of the IEEE Conference on Computer Vision and
  Pattern Recognition (CVPR)}, 2017.

\bibitem{fast-rcnn}
Ross Girshick.
\newblock Fast {R-CNN}.
\newblock In {\em Proceedings of the IEEE International Conference on Computer
  Vision (ICCV)}, 2015.

\bibitem{faster-rcnn-implementation}
Ross Girshick.
\newblock Faster {R-CNN}.
\newblock \small{\url{https://github.com/rbgirshick/py-faster-rcnn}}, 2015.

\bibitem{alp2017}
Riza~Alp Guler, George Trigeorgis, Epameinondas Antonakos, Patrick Snape,
  Stefanos Zafeiriou, and Iasonas Kokkinos.
\newblock {DenseReg}: Fully convolutional dense shape regression in-the-wild.
\newblock In {\em Proceedings of the IEEE Conference on Computer Vision and
  Pattern Recognition (CVPR)}, 2017.

\bibitem{mask-rcnn}
Kaiming He, Georgia Gkioxari, Piotr Doll{\'a}r, and Ross Girshick.
\newblock Mask {R-CNN}.
\newblock In {\em Proceedings of the IEEE International Conference on Computer
  Vision (ICCV)}, 2017.

\bibitem{resnet}
Kaiming He, Xiangyu Zhang, Shaoqing Ren, and Jian Sun.
\newblock Deep residual learning for image recognition.
\newblock In {\em Proceedings of the IEEE Conference on Computer Vision and
  Pattern Recognition (CVPR)}, 2016.

\bibitem{huber}
Peter~J. Huber.
\newblock Robust estimation of a location parameter.
\newblock {\em The Annals of Mathematical Statistics}, 35(1):73--101, 03 1964.

\bibitem{avod}
Jason Ku, Melissa Mozifian, Jungwook Lee, Ali Harakeh, and Steven~L. Waslander.
\newblock Joint 3{D} proposal generation and object detection from view
  aggregation.
\newblock In {\em Proceedings of the IEEE/RSJ International Conference on
  Intelligent Robots and Systems (IROS)}, 2018.

\bibitem{ku2019}
Jason Ku, Alex~D Pon, and Steven~L Waslander.
\newblock Monocular {3D} object detection leveraging accurate proposals and
  shape reconstruction.
\newblock In {\em Proceedings of the IEEE Conference on Computer Vision and
  Pattern Recognition (CVPR)}, 2019.

\bibitem{lange1990}
Kenneth Lange.
\newblock Convergence of {EM} image reconstruction algorithms with {Gibbs}
  smoothing.
\newblock {\em IEEE Transactions on Medical Imaging}, 9(4):439--446, 1990.

\bibitem{fpn}
Tsung-Yi Lin, Piotr Doll{\'a}r, Ross Girshick, Kaiming He, Bharath Hariharan,
  and Serge Belongie.
\newblock Feature pyramid networks for object detection.
\newblock In {\em Proceedings of the IEEE Conference on Computer Vision and
  Pattern Recognition (CVPR)}, 2017.

\bibitem{retinanet}
Tsung-Yi Lin, Priya Goyal, Ross Girshick, Kaiming He, and Piotr Doll{\'a}r.
\newblock Focal loss for dense object detection.
\newblock In {\em Proceedings of the IEEE International Conference on Computer
  Vision (ICCV)}, 2017.

\bibitem{coco}
Tsung-Yi Lin, Michael Maire, Serge Belongie, James Hays, Pietro Perona, Deva
  Ramanan, Piotr Doll{\'a}r, and C~Lawrence Zitnick.
\newblock Microsoft {COCO}: Common objects in context.
\newblock In {\em Proceedings of the European Conference on Computer Vision
  (ECCV)}, 2014.

\bibitem{ssd}
Wei Liu, Dragomir Anguelov, Dumitru Erhan, Christian Szegedy, Scott Reed,
  Cheng-Yang Fu, and Alexander~C. Berg.
\newblock {SSD}: Single shot multibox detector.
\newblock In {\em Proceedings of the European Conference on Computer Vision
  (ECCV)}, 2016.

\bibitem{noy2014}
Asaf Noy and Koby Crammer.
\newblock Robust forward algorithms via {PAC-Bayes} and {Laplace}
  distributions.
\newblock In {\em Proceedings of the International Conference on Artificial
  Intelligence and Statistics (AISTATS)}, 2014.

\bibitem{faster-rcnn}
Shaoqing Ren, Kaiming He, Ross Girshick, and Jian Sun.
\newblock Faster {R-CNN}: Towards real-time object detection with region
  proposal networks.
\newblock {\em arXiv preprint arXiv:1506.01497}, 2015.

\bibitem{vgg16}
Karen Simonyan and Andrew Zisserman.
\newblock Very deep convolutional networks for large-scale image recognition.
\newblock In {\em Proceedings of the International Conference on Learning
  Representations (ICLR)}, 2015.

\bibitem{wang2018}
Guijin Wang, Xinghao Chen, Hengkai Guo, and Cairong Zhang.
\newblock Region ensemble network: Towards good practices for deep {3D} hand
  pose estimation.
\newblock {\em Journal of Visual Communication and Image Representation},
  55:404--414, 2018.

\bibitem{detectron2}
Yuxin Wu, Alexander Kirillov, Francisco Massa, Wan-Yen Lo, and Ross Girshick.
\newblock Detectron2.
\newblock \url{https://github.com/facebookresearch/detectron2}, 2019.

\bibitem{pixor}
Bin Yang, Wenjie Luo, and Raquel Urtasun.
\newblock {PIXOR}: Real-time 3{D} object detection from point clouds.
\newblock In {\em Proceedings of the IEEE Conference on Computer Vision and
  Pattern Recognition (CVPR)}, 2018.

\end{thebibliography}
}

\newpage
\appendix
\section*{Appendix}
\section{Derivation of the Kullback-Leibler\\Divergence of Laplace Distributions}
\label{sec:derivation}

The Kullback-Leibler (KL) divergence between a probability distribution $q(x)$ and a reference distribution $p(x)$ is defined as follows:
\begin{align}
  \begin{split}
    &D(p(x)\|q(x)) \\ 
    &\quad = H(p(x),q(x)) - H(p(x)) \\
    &\quad = - \int_{-\infty}^\infty p(x) \log q(x) dx + \int_{-\infty}^\infty p(x) \log p(x) dx
  \end{split}
\end{align}
where $H(p(x))$ is the entropy of $p(x)$ and $H(p(x),q(x))$ is the cross entropy between $p(x)$ and $q(x)$.
When both $p(x)$ and $q(x)$ are Laplace distributions,
\begin{equation}
    p(x) = \frac{1}{2b_1} \exp\left(-\frac{\left|x - \mu_1\right|}{b_1}\right)
\end{equation}
and
\begin{equation}
    q(x) = \frac{1}{2b_2} \exp\left(-\frac{\left|x - \mu_2\right|}{b_2}\right)\text{,}
\end{equation}
the cross entropy becomes
\begin{align}
  \begin{split}
    &H(p(x),q(x)) \\
    &\quad = -\int_{-\infty}^\infty p(x) \log q(x) dx \\
    &\quad = \int_{-\infty}^\infty \frac{\left|x - \mu_2\right|}{2b_1 b_2} \exp\left(-\frac{\left|x - \mu_1\right|}{b_1}\right) dx + \log(2b_2)\text{.}
  \end{split}
\end{align}
To evaluate the integral, consider the case when $\mu_1 \geq \mu_2$,
\begin{align}
  \begin{split}
    &\int_{-\infty}^\infty \frac{\left|x - \mu_2\right|}{2b_1 b_2} \exp\left(-\frac{\left|x - \mu_1\right|}{b_1}\right) dx \\
    &\quad = \int_{-\infty}^{\mu_2} \frac{\mu_2 - x}{2b_1 b_2} \exp\left(-\frac{\mu_1 - x}{b_1}\right) dx \\
    &\quad + \int_{\mu_2}^{\mu_1} \frac{x - \mu_2}{2b_1 b_2} \exp\left(-\frac{\mu_1 - x}{b_1}\right) dx \\
    &\quad + \int_{\mu_1}^{\infty} \frac{x - \mu_2}{2b_1 b_2} \exp\left(-\frac{x - \mu_1}{b_1}\right) dx
  \end{split}
\end{align}
and when $\mu_1 < \mu_2$,
\begin{align}
  \begin{split}
    &\int_{-\infty}^\infty \frac{\left|x - \mu_2\right|}{2b_1 b_2} \exp\left(-\frac{\left|x - \mu_1\right|}{b_1}\right) dx \\
    &\quad = \int_{-\infty}^{\mu_1} \frac{\mu_2 - x}{2b_1 b_2} \exp\left(-\frac{\mu_1 - x}{b_1}\right) dx \\
    &\quad + \int_{\mu_1}^{\mu_2} \frac{\mu_2 - x}{2b_1 b_2} \exp\left(-\frac{x - \mu_1}{b_1}\right) dx \\
    &\quad+ \int_{\mu_2}^{\infty} \frac{x - \mu_2}{2b_1 b_2} \exp\left(-\frac{x - \mu_1}{b_1}\right) dx\text{.}
  \end{split}
\end{align}
Evaluating each of the integrals produces the following result:
\begin{align}
  \begin{split}
    &\int_{-\infty}^\infty \frac{\left|x - \mu_2\right|}{2b_1 b_2} \exp\left(-\frac{\left|x - \mu_1\right|}{b_1}\right) dx \\
    &\quad =
    \begin{cases} 
        \frac{b_1 \exp \left(-\frac{\mu_1 - \mu_2}{b_1}\right) + (\mu_1 - \mu_2)}{b_2}, & \mu_1 \geq \mu_2 \\
        \frac{b_1 \exp \left(-\frac{\mu_2 - \mu_1}{b_1}\right) + (\mu_2 - \mu_1)}{b_2}, & \mu_1 < \mu_2\text{.}
    \end{cases}
  \end{split}
\end{align}
Altogether, the cross entropy between two Laplace distribution is
\begin{align}
  \begin{split}
    &H(p(x),q(x)) \\
    &\quad = -\int_{-\infty}^\infty p(x) \log q(x) dx \\
    &\quad = \frac{b_1 \exp\left(-\frac{\left|\mu_1 - \mu_2\right|}{b_1} \right) + \left|\mu_1 - \mu_2\right|}{b_2} + \log(2b_2)
  \end{split}
\end{align}
and the entropy of a Laplace distribution is
\begin{equation}
    H(p(x)) = -\int_{-\infty}^\infty p(x) \log p(x) dx = 1 + \log(2b_1)\text{.}
\end{equation}
As a result, the KL divergence between two Laplace distributions is
\begin{align}
  \begin{split}
    &D(p(x)\|q(x)) \\
    &\quad = \frac{b_1 \exp\left(-\frac{\left|\mu_1 - \mu_2\right|}{b_1} \right) + \left|\mu_1 - \mu_2\right|}{b_2} + \log\frac{b_2}{b_1} - 1\text{.}
  \end{split}
\end{align}

\section{Proof of Differentiability}
\label{sec:derivatives}

In Section~\ref{sec:relationship}, we utilize a second-order approximation of our proposed loss function (Equation \eqref{eqn:kl_loss}) about zero to illustrate its relationship with the Huber loss (Equation \eqref{eqn:huber_loss}).
In this section, we will derive the first and second derivatives of our loss and prove their existence at zero.

Equation \eqref{eqn:kl_loss} can be written as follows:
\begin{equation}
    D_{\alpha,\beta}(x) =
    \begin{cases}
        \frac{\alpha\exp\left(-\frac{x}{\alpha}\right) + x - \alpha}{\beta} & x \geq 0 \\
        \frac{\alpha\exp\left(\frac{x}{\alpha}\right) - x - \alpha}{\beta} & x < 0\text{.}
    \end{cases}
\end{equation}
Therefore, its first derivative is
\begin{align}
  \begin{split}
    D'_{\alpha,\beta}(x) &=
    \begin{cases}
        \frac{1 - \exp\left(-\frac{x}{\alpha}\right)}{\beta} & x \geq 0 \\
        -\frac{1 - \exp\left(\frac{x}{\alpha}\right)}{\beta} & x < 0
    \end{cases} \\
    &= \frac{\sgn(x)}{\beta} \left(1 - \exp\left( -\frac{|x|}{\alpha} \right) \right)\text{,}
  \end{split}
\end{align}
and its second derivative is
\begin{equation}
    D''_{\alpha,\beta}(x) =
    \begin{cases}
        \frac{\exp\left(-\frac{x}{\alpha}\right)}{\alpha\beta} & x \geq 0 \\
        \frac{ \exp\left(\frac{x}{\alpha}\right)}{\alpha\beta} & x < 0
    \end{cases}
     = \frac{1}{\alpha \beta} \exp\left(-\frac{|x|}{\alpha} \right)\text{.}
\end{equation}
To prove the existence of the derivatives at $x=0$, we need to show that both $D_{\alpha,\beta}(x)$ and $D'_{\alpha,\beta}(x)$ are differentiable at $x=0$.
The derivative of any function $f(x)$ at $x=a$ is defined as follows:
\begin{equation}
    f'(a) = \lim_{x \rightarrow a} \frac{f(x)-f(a)}{x-a}\text{.}
\end{equation}
The function $f(x)$ is said to be differentiable at $x=a$ when the limit exists.
To prove our claims, we will rely heavily on the following well-known identity:
\begin{equation}
     \lim_{x \rightarrow 0} (1+x)^{\frac{1}{x}} = e\text{.}
     \label{eqn:identity}
\end{equation}

\begin{claim}
    The following function is differentiable at ${x=0}$:
    \begin{equation}
        D_{\alpha,\beta}(x) = \frac{\alpha\exp\left(-\frac{|x|}{\alpha}\right) + |x| - \alpha}{\beta}
    \end{equation}
    when $\alpha > 0$ and $\beta > 0$.
\end{claim}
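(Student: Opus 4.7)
The plan is to verify differentiability from the definition by computing the one-sided limits of the difference quotient $\bigl(D_{\alpha,\beta}(x)-D_{\alpha,\beta}(0)\bigr)/x$ and showing they agree. A direct substitution gives $D_{\alpha,\beta}(0) = (\alpha \cdot 1 + 0 - \alpha)/\beta = 0$, so the problem reduces to evaluating $\lim_{x\to 0^\pm} D_{\alpha,\beta}(x)/x$.

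The key auxiliary fact I would establish first, using only the hint identity \eqref{eqn:identity}, is that
\[
\lim_{u\to 0} \frac{e^u - 1 - u}{u} = 0.
\]
Taking logarithms in \eqref{eqn:identity} yields $\lim_{x\to 0} \ln(1+x)/x = 1$; then the substitution $x = e^u - 1$ (which tends to $0$ with $u$) turns this into $\lim_{u\to 0} u/(e^u-1) = 1$, hence $\lim_{u\to 0}(e^u-1)/u = 1$, and subtracting $1$ from both sides gives the claim. This is the step I expect to be the main obstacle, since it requires going through the identity rather than invoking Taylor expansion or L'Hôpital's rule, and one must be careful to justify that the substitution is valid on a punctured neighbourhood of $0$.

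Once this limit is in hand, the rest is a direct change of variables. For $x>0$, write
\[
\frac{D_{\alpha,\beta}(x)}{x} = \frac{\alpha\bigl(e^{-x/\alpha}-1\bigr) + x}{\beta x}
\]
and let $u = -x/\alpha$, so $u\to 0^-$ as $x\to 0^+$ and $x = -\alpha u$; after simplification the quotient becomes $-\beta^{-1}(e^u-1-u)/u$, which tends to $0$. For $x<0$, use $|x| = -x$ to get
\[
\frac{D_{\alpha,\beta}(x)}{x} = \frac{\alpha\bigl(e^{x/\alpha}-1\bigr) - x}{\beta x},
\]
substitute $u = x/\alpha$ (so again $u\to 0^-$), and obtain $\beta^{-1}(e^u-1-u)/u$, again tending to $0$.

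Since both one-sided limits exist and equal $0$, the difference quotient has a limit at $x=0$, so $D_{\alpha,\beta}$ is differentiable there with $D'_{\alpha,\beta}(0) = 0$. This also matches the piecewise formula for $D'_{\alpha,\beta}$ given later, and it is consistent with the expression $D'_{\alpha,\beta}(x) = \beta^{-1}\sgn(x)\bigl(1-\exp(-|x|/\alpha)\bigr)$ used in the second-order approximation of Section \ref{sec:relationship}.
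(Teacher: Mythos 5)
Your proof is correct, and it reaches the same conclusion ($D'_{\alpha,\beta}(0)=0$) by the same overall strategy as the paper — one-sided limits of the difference quotient, with the exponential limit extracted from the identity $\lim_{x\to 0}(1+x)^{1/x}=e$ — but the decomposition is genuinely different. The paper splits the quotient into the two pieces $\alpha\bigl(\exp(-|x|/\alpha)-1\bigr)/(\beta x)$ and $|x|/(\beta x)$, each of which has one-sided limits $\mp 1/\beta$ and $\pm 1/\beta$ that only cancel after being added side by side; this requires some care, since neither piece has a two-sided limit on its own. You instead keep the quotient intact, change variables so that the numerator becomes the combination $e^u-1-u$, and reduce both one-sided limits to the single auxiliary fact $\lim_{u\to 0}(e^u-1-u)/u=0$, which you correctly derive from the identity via $\ln(1+x)/x\to 1$ and the substitution $x=e^u-1$ (noting the substitution is valid on a punctured neighbourhood because $e^u=1$ only at $u=0$). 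Your route is arguably cleaner: it isolates the analytic content in one lemma and sidesteps the sum-of-divergent-one-sided-limits bookkeeping; the paper's version, on the other hand, makes the $\pm 1/\beta$ cancellation explicit, which foreshadows why the first derivative is continuous but the function is not twice "nicely" behaved without the companion Claim 2. Both are complete and correct.
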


\begin{proof}
    The derivative of $D_{\alpha,\beta}(x)$ at $x=0$ is defined as
    \begin{align}
      \begin{split}
        D'_{\alpha,\beta}(0) &= \lim_{x \rightarrow 0} \frac{\alpha\exp\left(-\frac{|x|}{\alpha}\right)+ |x| - \alpha }{\beta x} \\
        &= \lim_{x \rightarrow 0} \frac{\alpha\left(\exp\left(-\frac{|x|}{\alpha}\right) - 1\right)}{\beta x} + \lim_{x \rightarrow 0} \frac{|x|}{\beta x}\text{.}
      \end{split}
    \end{align}
    The right limit of the first term is equal to the following:
    \begin{equation}
        \lim_{x \rightarrow 0^+} \frac{\alpha\left(\exp\left(-\frac{x}{\alpha}\right) - 1\right)}{\beta x} =  \lim_{u \rightarrow 0^+} -\frac{u}{\beta \log(u+1)}
    \end{equation}
    where we substitute $u=\exp(-\sfrac{x}{\alpha})-1$ in for $x$; therefore, $x=-\alpha\log(u+1)$ and $u \rightarrow 0$ as $x \rightarrow 0$.
    Utilizing Equation~\eqref{eqn:identity}, the limit becomes
    \begin{align}
      \begin{split}
        \lim_{u \rightarrow 0^+} -\frac{u}{\beta \log(u+1)} &= \lim_{u \rightarrow 0^+} -\frac{1}{\beta \log(u+1)^\frac{1}{u}} \\
        &=  -\frac{1}{\beta \log\left(\lim_{u \rightarrow 0^+}(u+1)^\frac{1}{u}\right)} \\
        &= -\frac{1}{\beta}\text{.}
      \end{split}
    \end{align}
    Similarly, for the left limit of the first term,
    \begin{equation}
        \lim_{x \rightarrow 0^-} \frac{\alpha\left(\exp\left(\frac{x}{\alpha}\right) - 1\right)}{\beta x} =\lim_{v \rightarrow 0^-} \frac{v}{\beta \log(v+1)} = \frac{1}{\beta}
    \end{equation}
    where we substitute $v=\exp(\sfrac{x}{\alpha})-1$ in for $x$; as a result, $x=\alpha\log(v+1)$ and $v \rightarrow 0$ as $x \rightarrow 0$.
    Furthermore, the right limit of the second term is
    \begin{equation}
        \lim_{x \rightarrow 0^+} \frac{|x|}{\beta x} = \lim_{x \rightarrow 0^+} \frac{x}{\beta x} = \frac{1}{\beta}\text{,}
    \end{equation}
    and the left limit is
    \begin{equation}
        \lim_{x \rightarrow 0^-} \frac{|x|}{\beta x} = \lim_{x \rightarrow 0^-} -\frac{x}{\beta x} = -\frac{1}{\beta}\text{.}
    \end{equation}
    By adding both terms together, both sides of the limit become zero, which means the limit exists and proves $D_{\alpha,\beta}(x)$ is differentiable at ${x=0}$.
\end{proof}

\begin{claim}
    The following function is differentiable at ${x=0}$:
    \begin{equation}
        D'_{\alpha,\beta}(x) = \frac{\sgn(x)}{\beta} \left( 1 - \exp\left( -\frac{|x|}{\alpha} \right) \right)
    \end{equation}
    when $\alpha > 0$ and $\beta > 0$.
\end{claim}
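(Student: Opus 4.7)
The plan is to proceed directly from the definition of the derivative at $x=0$, mirroring the style of the previous claim. First I would note that by Claim~1 we have $D'_{\alpha,\beta}(0) = 0$, so the difference quotient reduces to
\begin{equation}
D''_{\alpha,\beta}(0) = \lim_{x \rightarrow 0} \frac{\sgn(x)\left(1 - \exp\left(-\frac{|x|}{\alpha}\right)\right)}{\beta x}.
\end{equation}
Since $\sgn(x)/x = 1/|x|$ away from zero, this is equivalent to computing the one-sided limits of $(1-\exp(-|x|/\alpha))/(\beta |x|)$ and showing they agree (and equal $1/(\alpha\beta)$, consistent with the formula $D''_{\alpha,\beta}(x) = \exp(-|x|/\alpha)/(\alpha\beta)$ derived earlier in the section).

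Next I would handle the right limit by the substitution $u = \exp(-x/\alpha) - 1$, so that $x = -\alpha\log(u+1)$ and $u \to 0$ as $x \to 0^+$. The quotient transforms to $-u/(-\alpha\beta \log(u+1)) = u/(\alpha\beta\log(u+1))$, and the key identity $\lim_{u\to 0}(1+u)^{1/u} = e$ from Equation~\eqref{eqn:identity} gives $\log(u+1)/u \to 1$, hence the right limit equals $1/(\alpha\beta)$. The left limit is handled symmetrically with the substitution $v = \exp(x/\alpha)-1$, yielding the same value $1/(\alpha\beta)$. Since the two one-sided limits agree, the derivative exists at zero and the claim is established.

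The main obstacle is largely cosmetic rather than conceptual: the factor of $\sgn(x)$ must be carefully unpacked into the two branches so that the cancellations with $|x|/x$ come out correctly on each side, and one must make sure that the substitutions $u,v$ approach $0$ with the appropriate sign (both from below as $x \to 0^\pm$, since $\exp$ of a small quantity is slightly less than $1$ when the argument is negative). Apart from this sign bookkeeping, the proof is a routine reapplication of the limit identity already used in Claim~1, so it should be short and parallel in structure.
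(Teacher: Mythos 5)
Your proposal is correct and follows essentially the same route as the paper: form the difference quotient at zero, substitute $u=\exp(-x/\alpha)-1$ (resp.\ $v=\exp(x/\alpha)-1$) on each side, and invoke the identity $\lim_{u\to 0}(1+u)^{1/u}=e$ to show both one-sided limits equal $1/(\alpha\beta)$. Your remark that $\sgn(x)/x=1/|x|$ turns the quotient into an even function is a small tidy simplification, and your sign bookkeeping for $u,v$ is right.
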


\begin{proof}
    The derivative of $D'_{\alpha,\beta}(x)$ at $x=0$ is defined as
    \begin{equation}
        D''_{\alpha,\beta}(0) = \lim_{x \rightarrow 0} \frac{\sgn(x)\left(1 - \exp\left( -\frac{|x|}{\alpha} \right) \right)}{\beta x}\text{.}
    \end{equation}
    The right limit is equal to the following:
    \begin{equation}
        \lim_{x \rightarrow 0^+} \frac{1 - \exp\left( -\frac{x}{\alpha}\right)}{\beta x} = \lim_{u \rightarrow 0^+} \frac{u}{\alpha\beta \log(u+1)}
    \end{equation}
    where we substitute $u=\exp(-\sfrac{x}{\alpha})-1$ in for $x$; therefore, $x=-\alpha\log(u+1)$ and $u \rightarrow 0$ as $x \rightarrow 0$.
    Utilizing Equation~\eqref{eqn:identity}, the limit becomes
    \begin{align}
      \begin{split}
        \lim_{u \rightarrow 0^+} \frac{u}{\alpha\beta \log(u+1)} &= \lim_{u \rightarrow 0^+} \frac{1}{\alpha\beta \log(u+1)^\frac{1}{u}} \\
        &= \frac{1}{\alpha\beta \log\left(\lim_{u \rightarrow 0^+} (u+1)^\frac{1}{u}\right)} \\
        &= \frac{1}{\alpha\beta}
      \end{split}
    \end{align}
    Similarly, for the left limit,
    \begin{equation}
        \lim_{x \rightarrow 0^-} -\frac{1 - \exp\left(\frac{x}{\alpha}\right)}{\beta x} = \lim_{v \rightarrow 0^-} \frac{v}{\alpha\beta \log(v+1)} = \frac{1}{\alpha\beta}
    \end{equation}
    where we substitute $v=\exp(\sfrac{x}{\alpha})-1$ in for $x$; as a result, $x=\alpha\log(v+1)$ and $v \rightarrow 0$ as $x \rightarrow 0$.
    Therefore, both sides of the limit are equal, which means the limit exists and proves $D'_{\alpha,\beta}(x)$ is differentiable at ${x=0}$.
\end{proof}

\section{Proof of Inequalities}
\label{sec:proofs}

In Section~\ref{sec:relationship}, we state that the Huber loss, $H_\alpha(x)$, is bounded below by $ D_{\alpha,\sfrac{1}{\alpha}}(x)$ and above by $D_{\sfrac{\alpha}{2},\sfrac{1}{\alpha}}(x)$, and the bounds are tight.
In this section, we prove these claims.
Since the loss functions are symmetric about $x = 0$, it is sufficient to prove only when $x \geq 0$.

\begin{claim}
    The following inequality holds for all ${x \in \mathbb{R}}$:
    \begin{equation}
        H_\alpha(x) - D_{\alpha,\sfrac{1}{\alpha}}(x) \geq 0
    \end{equation}
\end{claim}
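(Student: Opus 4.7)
The plan is to exploit the evenness of both $H_\alpha$ and $D_{\alpha,1/\alpha}$ and restrict to $x \geq 0$, then split at the Huber transition point $x = \alpha$, which is where the piecewise definitions on both sides change. Writing out $D_{\alpha,1/\alpha}(x) = \alpha^2 e^{-x/\alpha} + \alpha x - \alpha^2$ for $x \geq 0$, the inequality reduces to a case analysis on $[0,\alpha]$ and $[\alpha,\infty)$.

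For the tail region $x > \alpha$, I would simply substitute the linear piece of $H_\alpha$ and watch the $\alpha x$ terms cancel, leaving
\begin{equation}
H_\alpha(x) - D_{\alpha,1/\alpha}(x) = \alpha^2\!\left(\tfrac{1}{2} - e^{-x/\alpha}\right).
\end{equation}
Since $x>\alpha$ implies $e^{-x/\alpha} < e^{-1} < \tfrac12$, this is strictly positive. This is the easy case and also pins down where the bound is tight (only approached as $x \to \infty$, asymptotically).

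For the quadratic region $0 \leq x \leq \alpha$, let $f(x) := H_\alpha(x) - D_{\alpha,1/\alpha}(x) = \tfrac12 x^2 - \alpha x + \alpha^2 - \alpha^2 e^{-x/\alpha}$. The strategy is to check that $f$ vanishes to second order at the origin and is convex on the interval, so it must be nonnegative. Concretely, $f(0) = 0$, $f'(x) = x - \alpha + \alpha e^{-x/\alpha}$ gives $f'(0) = 0$, and $f''(x) = 1 - e^{-x/\alpha} \geq 0$ for $x \geq 0$. Hence $f'$ is nondecreasing starting from $0$, so $f$ is nondecreasing starting from $0$, giving $f(x) \geq 0$ on $[0,\alpha]$.

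The main technical care point — though not really a deep obstacle — is making sure the two pieces agree at $x = \alpha$ (they do: both give $\alpha^2(\tfrac12 - e^{-1})$) so that the global statement on $x \geq 0$ follows by concatenating the two subcases, and then invoking symmetry to extend to all of $\mathbb{R}$. I would close by noting that the inequality is tight at $x=0$ (where both sides and their first two derivatives match), which is the sense in which the bound is tight and foreshadows the companion upper-bound claim.
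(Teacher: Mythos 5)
Your proof is correct and follows essentially the same route as the paper's: split at the transition point $x=\alpha$, reduce the tail case to $\tfrac12 - e^{-x/\alpha}\geq 0$ via $e^{-1}<\tfrac12$, and handle $[0,\alpha]$ by showing $f(0)=f'(0)=0$ with $f''(x)=1-e^{-x/\alpha}\geq 0$ (the paper merely adds the cosmetic substitution $y=x/\alpha$). One small side remark is off: on the tail the gap $\alpha^2\left(\tfrac12 - e^{-x/\alpha}\right)$ \emph{increases} toward $\tfrac12\alpha^2$ as $x\to\infty$, so the bound is not approached asymptotically there — equality holds only at $x=0$, as you correctly state at the end.
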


\begin{proof}
    When $0 \leq x \leq \alpha$, the inequality is
    \begin{equation}
        \frac{1}{2} x^2 - \alpha x + \alpha^2 - \alpha^2 \exp\left(-\frac{x}{\alpha}\right) \geq 0
    \end{equation}
    and it becomes
    \begin{equation}
       \frac{1}{2} \alpha^2  - \alpha^2  \exp\left(-\frac{x}{\alpha}\right) \geq 0
    \end{equation}
    when $x \geq \alpha$.
    The inequalities can be simplified by substituting $y = \sfrac{x}{\alpha}$ and dividing by $\alpha^2$.
    As a result, we now need to prove
    \begin{equation}
        f_1(y) = \frac{1}{2} y^2 - y + 1 - \exp(-y) \geq 0
        \label{eqn:inequality-1}
    \end{equation}
    when $0 \leq y \leq 1$, and
    \begin{equation}
        f_2(y) = \frac{1}{2} - \exp(-y) \geq 0
        \label{eqn:inequality-2}
    \end{equation}
    when $y \geq 1$.
    Equation \eqref{eqn:inequality-1} is a well-known inequality and can be proven by utilizing the mean value theorem.
    The first and second derivative of $f_1(y)$ are
    \begin{equation}
        f'_1(y) = y - 1 + \exp(-y)
    \end{equation}
    and
    \begin{equation}
        f''_1(y) = 1 - \exp(-y)\text{.}
    \end{equation}
    When $y \geq 0$, $f'_1(y) \geq 0$ since $f''_1(y) \geq 0$ and ${f'_1(0) = 0}$; likewise, $f_1(y) \geq 0$ for the same reason, $f'_1(y) \geq 0$ and $f_1(0) = 0$.
    The proof of the second inequality follows directly from the first.
    From Equation \eqref{eqn:inequality-1}, we know that ${\exp(-1) \leq \sfrac{1}{2}}$; therefore, $f_2(y) \geq 0$ when $y \geq 1$ since $\exp(-y)$ is monotonically decreasing.
\end{proof}

\begin{claim}
    The following inequality holds for all ${x \in \mathbb{R}}$:
    \begin{equation}
        D_{\sfrac{\alpha}{2},\sfrac{1}{\alpha}}(x) - H_\alpha(x) \geq 0
    \end{equation}
\end{claim}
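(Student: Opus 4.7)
The plan is to mimic the structure of the previous claim: reduce to $x \geq 0$ by symmetry, split into the two regimes $0 \leq x \leq \alpha$ and $x \geq \alpha$ corresponding to the branches of $H_\alpha$, and substitute $y = 2x/\alpha$ to normalize away $\alpha$. Writing out $D_{\alpha/2,1/\alpha}(x) = \tfrac{\alpha^2}{2}\exp(-2x/\alpha) + \alpha x - \tfrac{\alpha^2}{2}$ for $x \geq 0$, the case $x \geq \alpha$ is immediate: after canceling the linear piece $\alpha x - \alpha^2/2 = H_\alpha(x)$, only $\tfrac{\alpha^2}{2}\exp(-2x/\alpha)$ remains, which is nonnegative. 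So all of the real work is in the quadratic branch $0 \leq x \leq \alpha$.

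For that branch, the substitution $y = 2x/\alpha \in [0,2]$ reduces the inequality to
\begin{equation}
g(y) := \exp(-y) + y - 1 - \tfrac{1}{4}y^2 \geq 0, \qquad 0 \leq y \leq 2,
\end{equation}
after factoring out $\alpha^2/2 > 0$. This is the substantive step; unlike the lower-bound proof, $g$ is not monotone on $[0,2]$ and one cannot simply chain ``$g(0)=0$ and $g' \geq 0$'' as was done for $f_1$.

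The approach will be to study $g$ via its derivatives. I would compute $g(0) = 0$, $g'(y) = 1 - \tfrac{1}{2}y - \exp(-y)$ with $g'(0) = 0$, and $g''(y) = \exp(-y) - \tfrac{1}{2}$. The key observation is that $g''$ changes sign exactly once on $[0,2]$, at $y = \log 2$, being positive before and negative after. Hence $g'$ is increasing on $[0,\log 2]$ and decreasing on $[\log 2, 2]$; since $g'(0) = 0$ and $g'(2) = -\exp(-2) < 0$, there is a unique $y_0 \in (\log 2, 2)$ with $g'(y_0) = 0$, and $g' \geq 0$ on $[0,y_0]$, $g' \leq 0$ on $[y_0,2]$. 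Therefore $g$ attains its minimum on $[0,2]$ at an endpoint, and direct evaluation gives $g(0) = 0$ and $g(2) = \exp(-2) > 0$, yielding $g \geq 0$ throughout.

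The main obstacle is exactly this non-monotonicity of $g$: one must be careful to argue via the ``up-then-down'' shape rather than a single sign statement on $g'$. Everything else (the $x \geq \alpha$ branch, the normalization, the symmetry reduction) is bookkeeping. Tightness at $x = 0$ follows since $g(0) = 0$, matching the earlier lower bound which is also tight there.
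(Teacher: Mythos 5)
Your proof is correct and follows essentially the same route as the paper's: the same reduction to $x \ge 0$, the same trivial handling of the $x \ge \alpha$ branch, the same normalization to $g(y)=\exp(-y)+y-1-\tfrac14 y^2$ on $[0,2]$, and the same identification of the up-then-down shape of $g$ via the single sign change of $g''$. The only difference is the last step: where the paper bounds $g$ on $[y_0,2]$ by substituting $\exp(-y)\ge 1-\tfrac12 y$ to reduce to the quadratic $\tfrac12 y-\tfrac14 y^2\ge 0$, you simply note $g$ is decreasing there and evaluate $g(2)=e^{-2}>0$, which is a slightly cleaner finish.
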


\begin{proof}
    The inequality is equal to
    \begin{equation}
        \frac{\alpha^2}{2} \exp\left(-\frac{2}{\alpha}x\right) + \alpha x - \frac{\alpha^2}{2} - \frac{1}{2} x^2 \geq 0
    \end{equation}
    when $0 \leq x \leq \alpha$, and it is
    \begin{equation}
       \frac{\alpha^2}{2} \exp\left(-\frac{2}{\alpha}x\right) \geq 0
    \end{equation}
    when $x \geq \alpha$.
    Again, the inequalities can be simplified by substituting $y = \sfrac{2x}{\alpha}$ and dividing by $\sfrac{\alpha^2}{2}$, which results in the following inequalities:
    \begin{equation}
        f_3(y) = \exp(-y) + y - 1 - \frac{1}{4} y^2 \geq 0
        \label{eqn:inequality-3}
    \end{equation}
    when $0 \leq y \leq 2$, and
    \begin{equation}
        f_4(y) = \exp(-y) \geq 0
        \label{eqn:inequality-4}
    \end{equation}
    when $y \geq 2$.
    The second inequality, $f_4(y) \geq 0$, clearly holds for all $y \in \mathbb{R}$; whereas, the first inequality, $f_3(y) \geq 0$, is less obvious.
    The first and second derivative of $f_3(y)$ are
    \begin{equation}
        f'_3(y) = -\exp(-y) + 1 - \frac{1}{2} y
    \end{equation}
    and
    \begin{equation}
        f''_3(y) = \exp(-y) - \frac{1}{2}\text{.}
    \end{equation}
    At $y = 0$, $f'_3(0) = 0$ and $f''_3(0) = \sfrac{1}{2} > 0$, and at $y = 2$, $f'_3(2) = -\exp(-2) < 0$.
    Since $f''_3(y)$ has a single root, $f'_3(y)$ can have at most two roots by Rolle's theorem.
    Therefore, there exists a unique value, $0 < y_0 < 2$, where $f'_3(y_0) = 0$, and on the interval $0 \leq y \leq y_0$, $f'_3(y) \geq 0$.
    Moreover, by the mean value theorem, $f_3(y) \geq 0$ on that interval, $0 \leq y \leq y_0$, since $f'_3(y) \geq 0$ and $f_3(0) = 0$.
    Note that $f'_3(y) \leq 0$, or equivalently
    \begin{equation}
        \exp(-y) \geq 1 - \frac{1}{2} y
    \end{equation}
    on the interval $y_0 \leq y \leq 2$.
    Consequently, to complete the proof of $f_3(y) \geq 0$, we just need to show that
    \begin{equation}
        1 - \frac{1}{2} y \geq 1 - y + \frac{1}{4} y^2
    \end{equation}
    or correspondingly
    \begin{equation}
        f_5(y) = -\frac{1}{4} y^2 + \frac{1}{2} y \geq 0
        \label{eqn:inequality-5}
    \end{equation}
    when $y_0 \leq y \leq 2$. 
    The roots of $f_5(y)$ are at $y = 0$ and $y = 2$, since $f_5(1) = \sfrac{1}{4} > 0$, $f_5(y) \geq 0$ on the interval $0 \leq y \leq 2$.
\end{proof}

\begin{claim}
    For all $x \in \mathbb{R}$, $H_\alpha(x)$ is tightly bounded between $D_{\alpha,\sfrac{1}{\alpha}}(x)$ and $D_{\sfrac{\alpha}{2},\sfrac{1}{\alpha}}(x)$.
    Therefore, the inequalities
    \begin{equation}
        D_{\alpha,\sfrac{1}{\alpha}}(x) \leq D_{\alpha_1,\beta_1}(x) \leq H_\alpha(x)
    \end{equation}
    and
    \begin{equation}
        H_\alpha(x) \leq D_{\alpha_2,\beta_2}(x) \leq  D_{\sfrac{\alpha}{2},\sfrac{1}{\alpha}}(x)
    \end{equation}
    hold only, for all $x \in \mathbb{R}$, when $\alpha_1 = \alpha$, $\alpha_2 = \sfrac{\alpha}{2}$, and $\beta_1 = \beta_2 = \sfrac{1}{\alpha}$.
\end{claim}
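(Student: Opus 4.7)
The plan is to treat the two chains of inequalities separately. Existence is already supplied by the two preceding claims at $(\alpha_1,\beta_1) = (\alpha,\sfrac{1}{\alpha})$ and $(\alpha_2,\beta_2) = (\sfrac{\alpha}{2},\sfrac{1}{\alpha})$, so only uniqueness needs proving. In each chain I would squeeze the free parameters between matching expansions of the two outer bounds, once at $|x|\to\infty$ and once near $x=0$; since every loss is even, I may restrict attention to $x\ge 0$.

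For the lower chain $D_{\alpha,\sfrac{1}{\alpha}}(x) \le D_{\alpha_1,\beta_1}(x) \le H_\alpha(x)$, the linear tail of $D_{\alpha_1,\beta_1}$ has slope $\sfrac{1}{\beta_1}$, whereas both outer bounds are linear with slope $\alpha$ for $x\ge\alpha$. A slope mismatch would make $(\sfrac{1}{\beta_1}-\alpha)x$ dominate every bounded remainder and break one side of the chain, so $\beta_1 = \sfrac{1}{\alpha}$. With $\beta_1$ fixed, the second-order expansion from Appendix~\ref{sec:derivatives} gives $D_{\alpha_1,\sfrac{1}{\alpha}}(x) = \tfrac{\alpha}{2\alpha_1}x^2 + O(x^4)$, while both outer bounds expand as $\tfrac{1}{2}x^2 + O(x^4)$ near zero (Huber is exactly $\tfrac{1}{2}x^2$ on $|x|\le\alpha$, and $D_{\alpha,\sfrac{1}{\alpha}}$ has leading coefficient $\tfrac{1}{2\alpha\cdot1/\alpha} = \tfrac{1}{2}$). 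Squeezing the quadratic coefficient between $\tfrac{1}{2}$ and $\tfrac{1}{2}$ forces $\tfrac{\alpha}{2\alpha_1} = \tfrac{1}{2}$, hence $\alpha_1 = \alpha$.

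For the upper chain $H_\alpha(x) \le D_{\alpha_2,\beta_2}(x) \le D_{\sfrac{\alpha}{2},\sfrac{1}{\alpha}}(x)$, the same tail analysis pins $\beta_2 = \sfrac{1}{\alpha}$, but now the near-zero expansions of the outer bounds disagree ($\tfrac{1}{2}x^2$ versus $x^2$), constraining $\alpha_2$ only to the interval $[\sfrac{\alpha}{2},\alpha]$. Uniqueness instead comes from the asymptotic intercepts: both outer bounds behave as $\alpha x - \tfrac{\alpha^2}{2} + o(1)$, whereas $D_{\alpha_2,\sfrac{1}{\alpha}}(x) = \alpha x - \alpha\alpha_2 + o(1)$, so the sandwich forces $\alpha\alpha_2 = \tfrac{\alpha^2}{2}$, i.e., $\alpha_2 = \sfrac{\alpha}{2}$. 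The one step that warrants careful writing is the asymptotic-squeeze lemma — making rigorous that a strict mismatch in slope or intercept eventually violates the chain, by isolating the dominant term and showing the $o(1)$ remainders cannot absorb a persistent linear or constant gap — but this is a short estimate rather than a substantive obstacle.
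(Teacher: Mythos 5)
Your proposal is correct, and the interesting part is that it resolves the $\alpha_1$ uniqueness by a different mechanism than the paper. Both you and the paper pin $\beta_1=\beta_2=\sfrac{1}{\alpha}$ and $\alpha_2=\sfrac{\alpha}{2}$ from the $x\to\infty$ asymptotics (your ``slope then intercept'' squeeze is exactly the paper's case analysis of $\lim_{x\to\infty} D_{\alpha_*,\beta_*}(x)-H_\alpha(x)$). For $\alpha_1$, however, the tail only yields $\alpha_1\in[\sfrac{\alpha}{2},\alpha]$, and the paper excludes the open interval $(\sfrac{\alpha}{2},\alpha)$ with a dedicated calculus lemma: it reduces to $f_6(y)=\exp(-y)+y-1-\tfrac{1}{2\gamma}y^2>0$ near $y=0$ and argues via Rolle's theorem and the mean value theorem. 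You instead pin $\alpha_1=\alpha$ directly by comparing quadratic coefficients at the origin, squeezing $\sfrac{\alpha}{2\alpha_1}$ between the coefficient $\sfrac{1}{2}$ of $D_{\alpha,\sfrac{1}{\alpha}}$ and the coefficient $\sfrac{1}{2}$ of $H_\alpha$; this is shorter, handles $\alpha_1>\alpha$ and $\alpha_1<\alpha$ in one stroke, and is pleasingly dual to the tail argument that pins $\alpha_2$ (tail intercepts of the outer bounds coincide for the upper chain, origin coefficients coincide for the lower chain). One small correction: the remainder in the expansion $D_{\alpha,\beta}(x)=\tfrac{1}{2\alpha\beta}x^2+\cdots$ is $O(|x|^3)$, not $O(x^4)$, since $\alpha\exp(-\sfrac{|x|}{\alpha})+|x|-\alpha$ contains the odd-in-$|x|$ term $-\sfrac{|x|^3}{6\alpha^2}$; this does not affect your argument, which only needs the remainder to be $o(x^2)$ so that a strict mismatch in the $x^2$ coefficient dominates for small $x$. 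The ``asymptotic-squeeze lemma'' you flag is indeed routine: divide the chain by $x$ (respectively $x^2$, or subtract $\alpha x$) and pass to the limit.
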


\begin{proof}
    The inequalities are equivalent to
    \begin{equation}
        D_{\alpha,\sfrac{1}{\alpha}}(x) - H_\alpha(x) \leq D_{\alpha_1,\beta_1}(x) - H_\alpha(x) \leq 0
    \end{equation}
    and
    \begin{equation}
        0 \leq D_{\alpha_2,\beta_2}(x) - H_\alpha(x)\leq  D_{\sfrac{\alpha}{2},\sfrac{1}{\alpha}}(x) - H_\alpha(x)\text{.}
    \end{equation}
    As $x$ goes to infinity,
    \begin{align}
        \lim_{x \rightarrow \infty} &D_{\sfrac{\alpha}{2},\sfrac{1}{\alpha}}(x) - H_\alpha(x) = 0 \\
        \lim_{x \rightarrow \infty} &D_{\alpha,\sfrac{1}{\alpha}}(x) - H_\alpha(x) = -\frac{1}{2}\alpha^2
    \end{align}
    and
    \begin{equation}
        \lim_{x \rightarrow \infty} D_{\alpha_*,\beta_*}(x) - H_\alpha(x) =
        \begin{cases}
            \infty, & \beta_* < \frac{1}{\alpha} \\
            \alpha \left(\frac{1}{2}\alpha - \alpha_* \right), & \beta_* = \frac{1}{\alpha} \\
            -\infty, & \beta_* > \frac{1}{\alpha}\text{.}
        \end{cases}
    \end{equation}
    For the inequalities to hold in the limit, $\beta_*$ must equal $\sfrac{1}{\alpha}$ regardless of the value of $\alpha_*$, $\alpha_2$ must equal $\sfrac{\alpha}{2}$, and $\alpha_1$ must be between $\sfrac{\alpha}{2}$ and $\alpha$, inclusively.
    Now, we need to demonstrate that there exists an $x \in \mathbb{R}$ where
    \begin{equation}
        D_{\alpha_1,\sfrac{1}{\alpha}}(x) - H_\alpha(x) > 0
    \end{equation}
    when $\sfrac{\alpha}{2} < \alpha_1 < \alpha$.
    The inequality is equal to
    \begin{equation}
        \alpha \left( \alpha_1 \exp\left(-\frac{x}{\alpha_1}\right) + x - \alpha_1 \right) - \frac{1}{2} x^2 > 0
    \end{equation}
    when $0 \leq x \leq \alpha$.
    To simplify the inequality, let us set $\alpha_1 = \sfrac{\alpha}{\gamma}$, substitute $y = \sfrac{\gamma x}{\alpha}$, and divide by $\sfrac{\alpha^2}{\gamma}$ where $1 < \gamma < 2$, which results in the following inequality:
    \begin{equation}
        f_6(y) = \exp(-y) + y - 1 - \frac{1}{2 \gamma} y^2 > 0
        \label{eqn:inequality-6}
    \end{equation}
    when $0 \leq y \leq \gamma$.
    The first and second derivative of $f_6(y)$ are
    \begin{equation}
        f'_6(y) = -\exp(-y) + 1 - \frac{1}{\gamma} y
    \end{equation}
    and
    \begin{equation}
        f''_6(y) = \exp(-y) - \frac{1}{\gamma}\text{.}
    \end{equation}
    At $y = 0$, $f'_6(0) = 0$ and $f''_6(0) = 1 - \sfrac{1}{\gamma} > 0$ for ${1 < \gamma < 2}$, and at $y = \gamma$, $f'_6(\gamma) = -\exp(-\gamma) < 0$.
    Like before, by Rolle's theorem, $f'_6(y)$ can have at most two roots since $f''_6(y)$ has a single root.
    Therefore, there exists a unique value, $0 < y_0 < \gamma$, where $f'_6(y_0) = 0$, and on the interval $0 < y < y_0$, $f'_6(y) > 0$.
    Again, by the mean value theorem, $f_6(y) > 0$ on that interval, $0 < y < y_0$, since $f'_6(y) > 0$ and $f_6(0) = 0$.
    Therefore, $\alpha_1$ must equal $\alpha$ for the original inequalities to hold.
\end{proof}

\newpage
\section{Experimental Validation of Target Approximation}
\label{sec:target-approximation}

In Section~\ref{sec:case-study}, we claim the target width and target height can be approximated with the percentage change between the anchor and the ground-truth.
To validate the approximation, we train the Faster R-CNN model with the following targets:
\begin{equation}
    t^*_w = \frac{w^*}{w_a} - 1
\end{equation}
and
\begin{equation}
    t^*_h = \frac{h^*}{h_a} - 1\text{.}
\end{equation}
No other changes were made to the implementation.
Refer to Section~\ref{sec:experiments-faster}, for details on the training and evaluation procedure.
The results of the experiment are shown in Table~\ref{tab:target-results}.
Only a very slight degradation in performance is observed by replacing the targets with its approximation, which we believe validates our use of the approximation in our interpretation of the loss functions.

\begin{table}[h]
  \centering
  \caption{Target Performance}
  \vspace{-0.5em}
  \scalebox{0.88}{
    \begin{tabular}{c|ccc}
      \multirow{2}{*}{Target} & \multicolumn{3}{c}{Mean Average Precision (mAP) @} \\
      & 0.5 IoU & 0.75 IoU & 0.5-0.95 IoU \\
      \hline
      \hline
      Original & \textbf{44.7} & \textbf{23.1} & \textbf{23.8} \\
      Approximation & 44.6 & 23.0 & 23.7 \\
    \end{tabular}
  }
  \label{tab:target-results}
\end{table}

\section{Evaluation of Proposed Loss Function}
\label{sec:loss-experiment}

Although our goal is to understand the Huber loss and not to replace it, for the sake of completion, we demonstrate that replacing the Huber loss with our proposed loss function produces comparable results.
As mentioned in Section~\ref{sec:relationship}, minimizing the loss function $D_{\sfrac{\alpha}{2},\sfrac{1}{\alpha}}$ is equivalent to minimizing an upper-bound on the Huber loss $H_\alpha$.
Therefore, for this experiment we simply replace $H_\alpha$ with $D_{\sfrac{\alpha}{2},\sfrac{1}{\alpha}}$ with no other modifications to Faster R-CNN.
Refer to Section~\ref{sec:experiments-faster}, for details on the training and evaluation procedure.
The results are shown in Table~\ref{tab:loss-results}.
The performance of the loss functions are nearly identical, which is expected due to the similarity of the functions.

\begin{table}[h]
  \centering
  \caption{Loss Function Performance}
  \vspace{-0.5em}
  \scalebox{0.88}{
    \begin{tabular}{c|ccc}
      \multirow{2}{*}{Loss Function} & \multicolumn{3}{c}{Mean Average Precision (mAP) @} \\
      & 0.5 IoU & 0.75 IoU & 0.5-0.95 IoU \\
      \hline
      \hline
      $H_\alpha$ & \textbf{44.7} & 23.1 & \textbf{23.8} \\
      $D_{\sfrac{\alpha}{2},\sfrac{1}{\alpha}}$ & \textbf{44.7} & \textbf{23.3} & \textbf{23.8}
    \end{tabular}
  }
  \label{tab:loss-results}
\end{table}

\end{document}